\DeclareMathOperator*{\argmax}{arg\,max}
\DeclareMathOperator*{\argmin}{arg\,min}
\theoremstyle{plain}
\newtheorem{theorem}{Theorem}[section]
\newtheorem{lemma}[theorem]{Lemma}
\theoremstyle{definition}
\theoremstyle{remark}
\DeclarePairedDelimiter\norm{\lVert}{\rVert}
\renewcommand{\vec}[1]{\bm{#1}}
\newcommand{\oneNorm}[2][1]{
    \ensuremath{{}_{\phantom{#1}}\norm{\vec{#2}}_{#1}}
}
\newcommand{\oneNormNoVec}[2][1]{
    \ensuremath{{}_{\phantom{#1}}\norm{{#2}}_{#1}}
}
\newcommand{\sign}{\textrm{sign}}
\newcommand{\abs}[1]{\vert #1 \vert}
\newcolumntype{P}[1]{>{\centering\arraybackslash}p{#1}}
\newcommand{\mycontour}[1]{\contour{green}{\textbf{#1}}}
\icmltitlerunning{A2Q+: Improving Accumulator-Aware Weight Quantization}
\begin{document}

\twocolumn[
\icmltitle{A2Q+: Improving Accumulator-Aware Weight Quantization}

\begin{icmlauthorlist}
\icmlauthor{Ian Colbert}{amd_san_diego}
\icmlauthor{Alessandro Pappalardo}{amd_dublin}
\icmlauthor{Jakoba Petri-Koenig}{amd_dublin}
\icmlauthor{Yaman Umuroglu}{amd_dublin}
\end{icmlauthorlist}

\icmlaffiliation{amd_san_diego}{AMD SW Technology Team, San Diego, California, USA}
\icmlaffiliation{amd_dublin}{AMD Research and Advanced Development, Dublin, Ireland}

\icmlcorrespondingauthor{Ian Colbert}{ian.colbert@amd.com}

\icmlkeywords{
    Accumulators,
    Machine Learning,
    Quantization,
    Deep Neural Networks,
    Inference Optimization}

\vskip 0.3in
]

\printAffiliationsAndNotice{}

\begin{abstract}
Quantization techniques commonly reduce the inference costs of neural networks by restricting the precision of weights and activations.
Recent studies show that also reducing the precision of the accumulator can further improve hardware efficiency at the risk of numerical overflow, which introduces arithmetic errors that can degrade model accuracy.
To avoid numerical overflow while maintaining accuracy, recent work proposed accumulator-aware quantization (A2Q)---{a quantization-aware training method that constrains model weights during training to safely use a target accumulator bit width during inference.}
{Although this shows promise, we demonstrate} that A2Q relies on an overly restrictive constraint and a sub-optimal weight initialization strategy that each introduce superfluous quantization error.
To address these shortcomings, we introduce: (1) {an improved} bound that alleviates accumulator constraints {without compromising overflow avoidance}; and (2) a new strategy for initializing quantized weights from pre-trained floating-point checkpoints.
We combine these contributions with weight normalization to introduce A2Q+.
We identify and characterize the various trade-offs that arise as a consequence of accumulator constraints and support our analysis with experiments that show A2Q+ significantly improves these trade-offs when compared to prior methods.
\end{abstract}

\vspace{-0.4cm}
\section{Introduction}
\label{sec:intro}

Quantizing neural network weights and activations to low-precision integers can drastically reduce the inference costs of multiplications.
However, the resulting products are commonly accumulated at high-precision and thus require high-precision additions and registers.
Recent studies show that reducing the standard $32$-bit accumulators to $16$ bits can yield a near-optimal $2\times$ increase in throughput and bandwidth efficiency on ARM processors~\cite{de2020quantization, xie2021overflow} and up to a $1.6\times$ reduction in resource utilization on FPGAs~\cite{colbert2023a2q}.
{However, exploiting such an optimization is highly non-trivial in practice as doing so also incurs a high risk of numerical overflow, which introduces arithmetic errors that can significantly degrade model accuracy~\cite{ni2021wrapnet}.}

To train quantized neural networks (QNNs) for low-precision accumulation,~\citealt{colbert2023a2q} recently proposed accumulator-aware quantization (A2Q).
While prior approaches had sought to either reduce the risk of numerical overflow~\cite{xie2021overflow, li2022downscaling, azamat2022squeezing} or mitigate its impact on model accuracy~\cite{ni2021wrapnet, blumenfeld2023towards}, A2Q circumvents arithmetic errors caused by numerical overflow by constraining {model} weights to restrict the range of {outputs}.
In doing so, A2Q provides state-of-the-art performance for low-precision accumulation with {guaranteed} overflow avoidance.

Our work contributes to this body of research by further improving the trade-off between accumulator bit width and model accuracy.
We show that A2Q relies on: (1) an overly restrictive $\ell_1$-norm bound that constrains QNNs more than necessary; and (2) a sub-optimal initialization strategy that forces QNNs to recover from superfluous quantization error.
In addressing these shortcomings, we establish a new state-of-the-art for low-precision accumulation with guaranteed overflow avoidance.
Our results show for the first time that ResNet50~\cite{he2016deep} can maintain $95\%$ of its baseline accuracy when trained on ImageNet~\cite{deng2009imagenet} to accumulate at 12 bits without overflow, resulting in a $+17\%$ improvement in test top-1 accuracy over A2Q.

Our contributions are four-fold:
(1) we introduce a new theoretical analysis for an improved $\ell_1$-norm bound that alleviates accumulator constraints without compromising overflow avoidance;
(2) we introduce a weight initialization strategy that minimizes the initial weight quantization error caused by accumulator constraints;
(3) we combine (1) and (2) with weight normalization~\cite{salimans2016weight} to introduce A2Q+ and show significant improvements in the trade-off between accumulator bit width and model accuracy;
and (4) we identify and characterize various trade-offs that arise as a consequence of accumulator constraints.

\section{Background and Related Work}
\label{sec:background}

\subsection{Low-Precision Accumulation}
\label{sec:low_prec_acc}

Neural network primitives are commonly executed as dot products consisting of numerous multiply-accumulate (MAC) operations.
During inference, the inputs to these dot products (\textit{i.e.}, the weights and activations) are increasingly being represented with lower precision {integers} to reduce the cost of multiplications; meanwhile, their products are still accumulated using high-precision additions.

The skew towards weight and activation quantization is in large part because the most commonly studied data formats in deep learning inference have required $8$ or more bits~\cite{wu2020integer, gholami2021survey}.
Because the cost of integer MACs scales quadratically with the bit widths of the weights and activations but linearly with that of the accumulator~\cite{horowitz20141, blott2018finn, hawks2021ps}, the cost of multiplications dwarfs that of additions in such paradigms.
However, with even lower precision data formats increasing in popularity~\cite{aggarwal2023post, wu2023understanding}, ignoring the accumulator to solely focus on low-precision weight and activation quantization will have diminishing returns.
{For example, \citealt{ni2021wrapnet} show that when constraining weights and activations to $3$-bit $\times$ $1$-bit multipliers, the cost of $32$-bit accumulation dominates that of multiplication, consuming nearly $75\%$ of the total power and $90\%$ of the total area of their MAC unit.
When reducing to an $8$-bit accumulator, they report 4$\times$ power savings and 5$\times$ area reduction.
In addition to power and area, recent work has also demonstrated savings in throughput and bandwidth utilization when reducing the accumulator bit width on general-purpose platforms~\cite{de2020quantization, xie2021overflow}.
{Both~\citealt{de2020quantization} and~\citealt{xie2021overflow} report a near-optimal $2\times$ increase in throughput on computer vision workloads when reducing the accumulator width from $32$ to $16$ bits on ARM processors.}}

Exploiting such an optimization in a principled manner is non-trivial in practice.
{The risk of numerical overflow increases exponentially as the accumulator bit width is reduced~\cite{colbert2023a2q}.
The resulting arithmetic errors can lead to catastrophic degradation in model accuracy if the accumulator is not large enough~\cite{ni2021wrapnet}.}

\subsection{Accumulator-Aware Quantization (A2Q)}
\label{sec:a2q_background}

Training neural networks with quantization in the loop is a useful means of recovering model accuracy lost to quantization errors~\cite{gholami2021survey, wu2020integer}.
The standard operators used to emulate quantization during training are built on top of uniform affine transformations that map high-precision values to low-precision ones.
We refer to the operators that perform these transformations as quantizers.
As given by Eq.~\ref{eq:standard_quantizer}, quantizers are commonly parameterized by zero-point~$z$ and scaling factor $s$.
Here, $z$ is an integer value that ensures that zero is exactly represented in the quantized domain, and $s$ is a strictly positive real scalar that corresponds to the resolution of the mapping.
Scaled values are commonly rounded to the nearest integers using half-way rounding, denoted by $\lfloor \cdot \rceil$, and elements that exceed the largest supported values in the quantized domain are clipped to $n$ and $p$, which depend on the target bit width~$b$.
We assume $n=-2^{b-1}$ and $p=2^{b-1} - 1$ when signed, and $n=0$ and $p=2^b - 1$ when unsigned.
\begin{equation}
Q(\vec{w}) := s \cdot \left( \textrm{clip}(\left\lfloor \frac{\vec{w}}{s} \right\rceil + z; n, p) - z \right) \label{eq:standard_quantizer}
\end{equation}
One approach to training QNNs for low-precision accumulation is to mitigate the impact of numerical overflow on model accuracy during QAT.
To do so, researchers have sought to either tune scale factors to control overflow rates~\cite{xie2021overflow, azamat2022squeezing, li2022downscaling} or train QNNs to be robust to wraparound arithmetic~\cite{ni2021wrapnet, blumenfeld2023towards}.
However, empirical estimates of overflow rely on \textit{a priori} knowledge of the input distribution, which is impractical to assume in many real-world use cases and can even introduce vulnerabilities~\cite{baier2019challenges}.
Thus, as an alternative,~\citealt{colbert2023a2q} proposed accumulator-aware quantization (A2Q) to directly train QNNs to use low-precision accumulators during inference without any risk of numerical overflow.

A2Q guarantees overflow avoidance by constraining the $\ell_1$-norm of weights to restrict the range of dot product outputs.
To accomplish this, \citealt{colbert2023a2q} introduce a quantizer inspired by weight normalization~\cite{salimans2016weight} that re-parameterizes weights $\vec{w}$ into vector $\vec{v}$ and scalar $g$ such that $\vec{w} = g \cdot \vec{v} / \Vert \vec{v} \Vert_1$.
This allows the $\ell_1$-norm of $\vec{w}$ to be learned as an independent parameter since $g = \Vert \vec{w} \Vert_1$.
{To avoid numerical overflow during inference, \citealt{colbert2023a2q} 
constrain $g$ according to a derived upper bound $T$ so that $\Vert Q(\vec{w}) \Vert_1 \leq T$, as further discussed in Section~\ref{sec:a2q_v2}.}
The resulting quantizer is defined as:
\begin{align}
Q(\vec{w}) :=  & ~s \cdot \textrm{clip} \left(\left\lfloor \frac{\vec{w}}{s} \right\rfloor ; n, p\right) \label{eq:a2q_v1_quantizer} \\
\textrm{where } \vec{w} = & ~\frac{\vec{v}}{\oneNorm{v}} \cdot \min(g, T) \label{eq:a2q_v1_clip} \\
\textrm{and } T = &~ s \cdot \frac{2^{P - 1} - 1}{2^{N - \mathbbm{1}_\text{signed}(\bm{x})}} \label{eq:a2q_v1_T}
\end{align}
Here, $P$ denotes the target accumulator bit width, $N$ denotes the input activation bit width, and $\mathbbm{1}_\text{signed}(\bm{x})$ is an indicator function that returns $1$ when input activations are signed and $0$ when unsigned.
Unlike in Eq.~\ref{eq:standard_quantizer}, scaled weights are rounded towards zero~\cite{loroch2017tensorquant}, denoted by $\left\lfloor \cdot \right\rfloor$, to prevent any upward rounding that may cause $\Vert Q(\vec{w})\Vert_1$ to increase past the derived upper bound $T$.
Each output channel is assumed to have its own accumulator so $g$ is independently defined and constrained per-channel.

\section{A2Q+}
\label{sec:a2q}

Let weights $\vec{q}$ be a $K$-dimensional vector of $M$-bit integers, and let
$\mathbb{Z}^K_N$ denote the set of all $K$-dimensional vectors of $N$-bit integers.
When accumulating the dot product of $\vec{q}$ by any $\vec{x} \in \mathbb{Z}^K_N$ into a signed $P$-bit register, \citealt{colbert2023a2q} show that one can avoid overflow if $\vec{q}$ satisfies:
\begin{equation}
\oneNorm{q} \leq \frac{2^{P - 1} - 1}{2^{N - \mathbbm{1}_\text{signed}(\bm{x})}}
\label{eq:a2q_v1}
\end{equation}
Irrespective of weight bit width $M$, Eq.~\ref{eq:a2q_v1} establishes an upper bound on the $\ell_1$-norm of $\vec{q}$ as a function of accumulator bit width~$P$ and activation bit width~$N$.
For fixed $N$, reducing $P$ exponentially tightens the constraint on~$\Vert \vec{q} \Vert_1$, which restricts the range of the weights by pulling them towards zero.
\citealt{colbert2023a2q} demonstrate that learning under such a constraint introduces a trade-off in the form of a Pareto frontier, where reducing the accumulator bit width invariably limits model accuracy within a fixed quantization design space.
We observe that this bound also introduces a non-trivial trade-off between activation bit width~$N$ and model accuracy.
Reducing the precision of the activations alleviates pressure on $\Vert \vec{q} \Vert_1$, which becomes more significant as $P$ is reduced.
However, aggressive discretization of intermediate activations can significantly hurt model accuracy~\cite{wu2020integer, gholami2021survey}.
In Section~\ref{sec:per_acc_dt}, we show that balancing this trade-off results in the Pareto-optimal activation bit width~$N$ decreasing with $P$.

Rather than tackling this balancing act (which is an intriguing problem for future work), our work extends the approach of A2Q to directly improves these trade-offs.
We demonstrate that A2Q relies on an overly restrictive constraint and a sub-optimal weight initialization strategy that each introduce superfluous quantization errors.
In Section~\ref{sec:experimental_results}, we show that minimizing these errors ultimately leads to improved model accuracy as the accumulator bit width is reduced.

\subsection{Improved $\ell_1$-norm Bound via Zero-Centering}
\label{sec:derivation}

Let the closed interval $[a, b]$ denote the representation range of a signed $P$-bit register.
To avoid overflow when accumulating $\vec{x}^T \vec{q}$ into this register, the dot product output needs to fall within $[a, b]$ for any $\vec{x} \in \mathbb{Z}^K_N$.
Without loss of generality, we assume a two's complement representation in our work, where $[a,b] = [-2^{P-1}, 2^{P-1}-1]$, as is common practice~\cite{wu2020integer, gholami2021survey}.

\citealt{colbert2023a2q} approach this task by constraining the magnitude of $\vec{x}^T \vec{q}$ such that $\vert \vec{x}^T \vec{q} \vert \leq 2^{P-1} - 1$.
They use worst-case values for~$\vec{x}$ to derive the upper bound on $\Vert \vec{q} \Vert_1$ given by Eq.~\ref{eq:a2q_v1}.
Note that this bound can similarly be constructed via H\"older's inequality~\cite{hardy1952inequalities} as shown in Eq.~\ref{eq:holders_inequality}, where $\Vert \vec{x} \Vert_{\infty} = 2^{N - \mathbbm{1}_\text{signed}(\bm{x})}$.
\begin{equation}
\vert \vec{x}^T \vec{q} \vert \leq \Vert \vec{x} \Vert_{\infty} \Vert \vec{q} \Vert_1 \leq 2^{P - 1} - 1
\label{eq:holders_inequality}
\end{equation}
{Note that this bound has two shortcomings: (1) it does not make full use of the representation range of the accumulator, which becomes increasingly important as its bit width $P$ is reduced; and (2) it depends on the sign of $\vec{x}$, which tightens the constraint by 2$\times$ when $\mathbbm{1}_\text{signed}(\bm{x})=0$.
In this work, we resolve both of these.}
We find that zero-centering our weight vector {such that $\textstyle \sum_i w_i = 0$} yields a favorable property, as formally presented in the following proposition. \\

\begin{figure}[t!]
\centering
\includegraphics[width=0.85\linewidth]{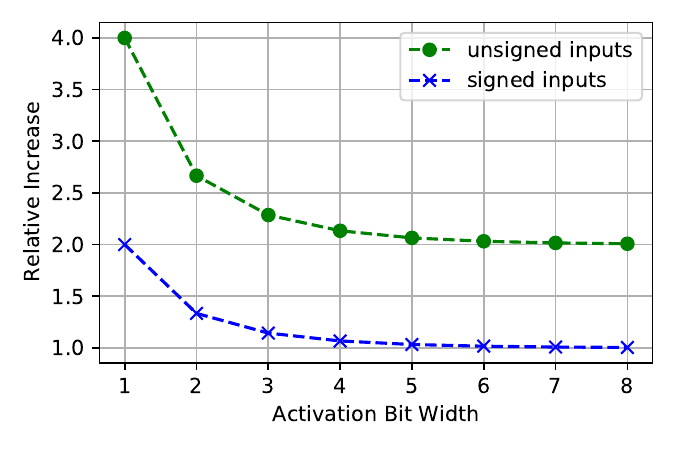}
\caption{We visualize Eq.~\ref{eq:ratio_new_to_old} for both signed (\textcolor{blue}{\textbf{blue crosses}}) and unsigned (\textcolor{ForestGreen}{\textbf{green circles}}) integers to show the relative increase in $\ell_1$-norm budget that our new bound (Eq.~\ref{eq:new_bound_prop_1}) gives to $\vec{q}$ when compared to the standard A2Q bound (Eq.~\ref{eq:a2q_v1}).}
\label{fig:v1_vs_v2}
\end{figure}

\begin{restatable}{proposition}{propOne}
Let $\vec{x}$ be a $K$-dimensional vector of $N$-bit integers such that the value of the $i$-th element $x_i$ lies within the closed interval $[c,d]$ and $d - c = 2^N - 1$.
Let $\vec{q}$ be a $K$-dimensional vector of signed integers centered at zero such that $\textstyle \sum_i q_i = 0$.
To guarantee overflow avoidance when accumulating the result of $\vec{x}^T \vec{q}$ into a {signed} $P$-bit register, it is sufficient that the $\ell_1$-norm of $\vec{q}$ satisfies:
\begin{equation}
\oneNorm{q} \leq \frac{2^P - 2}{2^N - 1}
\label{eq:new_bound_prop_1}
\end{equation}
\label{prop:upper_bound_on_l1_norm}
\end{restatable}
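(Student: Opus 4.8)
The plan is to turn the hypothesis $\sum_i q_i = 0$ into the freedom to recenter the activation vector, which is the sole source of the improvement over Eq.~\ref{eq:a2q_v1}. The starting point is the identity that holds for any scalar $\mu$,
\begin{equation}
\vec{x}^T \vec{q} = \sum_i x_i q_i = \sum_i (x_i - \mu) q_i + \mu \sum_i q_i = \sum_i (x_i - \mu) q_i ,
\end{equation}
where the last equality uses zero-centering. Unlike the derivation behind Eq.~\ref{eq:holders_inequality}, this leaves $\mu$ free, and I would spend that freedom to shrink the worst-case magnitude of the shifted activations.

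First I would set $\mu = \tfrac{c+d}{2}$, the midpoint of the activation interval. Because $d - c = 2^N - 1$, each shifted coordinate then satisfies $\lvert x_i - \mu \rvert \le \tfrac{d-c}{2} = \tfrac{2^N-1}{2}$, so $\Vert \vec{x} - \mu \bm{1} \Vert_\infty \le \tfrac{2^N-1}{2}$ whether or not $\vec{x}$ is signed. This sign-independent factor replaces the $\Vert \vec{x} \Vert_\infty = 2^{N-\mathbbm{1}_\text{signed}(\bm{x})}$ appearing in Eq.~\ref{eq:holders_inequality}; it is what removes the dependence on the sign of $\vec{x}$ and yields the promised relaxation (up to $2\times$ in the unsigned case). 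Note the midpoint need not be an integer---it enters only as an analytical device, while $\vec{x}^T \vec{q}$ itself stays integer-valued. Applying H\"older's inequality to the recentered sum then gives
\begin{equation}
\lvert \vec{x}^T \vec{q} \rvert = \Big\lvert \sum_i (x_i - \mu) q_i \Big\rvert \le \Vert \vec{x} - \mu \bm{1} \Vert_\infty \, \Vert \vec{q} \Vert_1 \le \frac{2^N - 1}{2} \, \Vert \vec{q} \Vert_1 .
\end{equation}

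The remaining step, and the only one needing care, is to match this symmetric magnitude bound to the asymmetric two's complement range $[-2^{P-1}, 2^{P-1}-1]$. The inequality confines $\vec{x}^T \vec{q}$ to a symmetric interval about zero of radius $\tfrac{2^N-1}{2}\Vert \vec{q}\Vert_1$, so it lands inside the register exactly when this radius does not exceed the smaller of the two endpoint magnitudes, namely $2^{P-1}-1$. Imposing $\tfrac{2^N-1}{2}\Vert\vec{q}\Vert_1 \le 2^{P-1}-1$ and solving yields
\begin{equation}
\Vert \vec{q} \Vert_1 \le \frac{2\,(2^{P-1}-1)}{2^N-1} = \frac{2^P - 2}{2^N - 1} ,
\end{equation}
which is Eq.~\ref{eq:new_bound_prop_1}. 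The main obstacle is conceptual rather than computational: recognizing that the zero-centering constraint is precisely the lever that lets us translate $\vec{x}$ at no cost, and then checking that it is the tighter, positive side of the asymmetric register that governs the resulting bound. As a sanity check on tightness, one can verify that the radius is attained by the extremal choice $x_i = d$ where $q_i > 0$ and $x_i = c$ where $q_i < 0$, which makes the construction---and hence the bound---natural rather than merely sufficient.
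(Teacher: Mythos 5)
Your proof is correct, and it takes a recognizably different route from the paper's. The paper splits $\vec{q}$ into its positive and negative parts (writing $\alpha$ and $\beta$ for their sums, so that zero-centering gives $\alpha = -\beta = \tfrac{1}{2}\Vert\vec{q}\Vert_1$), explicitly constructs the maximizing and minimizing activation vectors $\vec{\mu}$ and $\vec{\nu}$, and then verifies three register conditions one by one: the maximum output fits below $2^{P-1}-1$, the minimum fits above $-2^{P-1}$, and the output span fits within $2^P-1$; the first condition turns out to be the binding one and yields Eq.~\ref{eq:new_bound_prop_1}, while the other two produce looser bounds that are automatically satisfied. You instead spend the zero-sum hypothesis on recentering the activations, $\vec{x}^T\vec{q} = \sum_i (x_i - \mu) q_i$ with $\mu = \tfrac{c+d}{2}$, and then apply H\"older's inequality with $\Vert \vec{x} - \mu\bm{1} \Vert_\infty \le \tfrac{2^N-1}{2}$, reducing everything to one symmetric magnitude bound matched against the tighter (positive) endpoint of the two's complement range. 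The two arguments exploit the same cancellation---your extremal sanity check at the end is exactly the paper's $\vec{\mu}$---but yours is shorter and more transparent about where the improvement comes from: it directly upgrades the H\"older derivation of the original A2Q bound (Eq.~\ref{eq:holders_inequality}) by replacing $\Vert\vec{x}\Vert_\infty = 2^{N - \mathbbm{1}_\text{signed}(\bm{x})}$ with the sign-independent half-range $\tfrac{2^N-1}{2}$, and it dispatches the asymmetry of the register in one observation rather than three cases. What the paper's more laborious route buys is structural: its three explicitly enumerated inequalities (Eqs.~\ref{eq:ineq_1}--\ref{eq:ineq_3}) are reused verbatim as the ``necessary conditions'' in the proof of Prop.~\ref{prop:n2}, where each must be transferred from $\vec{w}/s$ to $\vec{q}$ separately; your streamlined argument would require restating those conditions before that second proof could proceed.
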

\vspace{-0.3cm}
The proof of this proposition is provided in Appendix~\ref{appendix:prop_1_proof}.
{It is important to note that our new bound (Eq.~\ref{eq:new_bound_prop_1}) utilizes the full representation range of the accumulator and is agnostic to the sign of the input data.}
Furthermore, when compared to the original bound (Eq.~\ref{eq:a2q_v1}), ours is greater by a factor of:
\begin{equation}
\frac{2^{N + 1 - \mathbbm{1}_\text{signed}(\bm{x})}}{2^N - 1}
\label{eq:ratio_new_to_old}
\end{equation}
In Fig.~\ref{fig:v1_vs_v2}, we visualize this relationship as a function of activation bit width $N$.
As implied by Eq.~\ref{eq:ratio_new_to_old}, the impact of our bound increases as the activation bit width is reduced, with the greatest significance in sub-4-bit quantization scenarios.
{In fact, our bound yields up to a 4$\times$ increase in the $\ell_1$-norm budget afforded to $\vec{q}$ when input activations are unsigned (\textit{i.e.}, $\mathbbm{1}_\text{signed}(\bm{x})=0$), and up to 2$\times$ when they are signed (\textit{i.e.}, $\mathbbm{1}_\text{signed}(\bm{x})=1$).}
In Section~\ref{sec:experimental_results}, we show that this increased freedom significantly improves model accuracy.

\subsection{Improved Initialization via Euclidean Projections}
\label{sec:l1_proj_init}

By re-parameterizing weight vector $\vec{w}$ as
defined below in Eq.~\ref{eq:weight_norm_reparam},
A2Q introduces two new parameters to initialize: $g$ and $\vec{v}$.
However, because $\vec{w}$ is a {function of these learned parameters}, it can no longer be directly initialized from a pre-trained floating-point checkpoint.
\begin{equation}
\vec{w} = g \cdot \frac{\vec{v}}{\oneNorm{v}}
\label{eq:weight_norm_reparam}
\end{equation}
One could trivially initialize $\vec{v}$ to be the pre-trained floating-point weight vector $\vec{w}_{\textrm{float}}$ and $g$ to be its $\ell_1$-norm, where $\vec{v} = \vec{w}_{\textrm{float}}$ and $g = \Vert \vec{w}_{\textrm{float}} \Vert_1$, making~$\vec{w} = \vec{w}_{\textrm{float}}$.
However, A2Q clips $g$ according to $T$ in Eq.~\ref{eq:a2q_v1_clip}.
{As a consequence, we observe that na\"ively initializing $g$ and $\vec{v}$ according to $\vec{w}_{\textrm{float}}$ introduces excessive weight quantization error when $\Vert \vec{w}_{\textrm{float}} \Vert_1 > T$ (see Appendix~\ref{appendix:l1_proj_ablation}).
Thus, we aim to minimize weight quantization error at initialization.

We formulate our objective as a  projection task described by the constrained convex optimization problem in Eq.~\ref{eq:convex_opt_prob}.
Here, the optimal initialization $\vec{v}^*$ minimizes the weight quantization error while satisfying the $\ell_1$-norm accumulator constraint on the re-scaled quantized weights $Q(\vec{w})$.
\begin{align}
\vec{v}^* =&~ \min_{\vec{v}} \frac{1}{2} \Vert Q(\vec{w}) - \vec{w}_{\textrm{float}} \Vert^2_2 
\label{eq:convex_opt_prob} \\
 \textrm{subject to} &~\Vert Q(\vec{w}) \Vert_1 \leq T \\
 \textrm{where} &~\vec{w} = g \cdot \frac{\vec{v}}{\oneNorm{v}}
\end{align}
To solve this optimization problem, we exploit the round-to-zero operator, which ensures that the magnitude of any weight $w_i$ is always greater than or equal to that of its quantized counterpart $Q(w_i)$, or more formally $\vert Q(w_i) \vert \leq \vert w_i \vert$ for all $i$.}
This allows us to solely focus on initializing $\vec{v}$ such that $\Vert \vec{v} \Vert_1 \leq T$ and then initialize $g$ such that $g = \Vert \vec{v} \Vert_1$.
Thus, we can simplify our optimization problem to:
\begin{align}
\vec{v}^* =&~ \min_{\vec{v}} \frac{1}{2} \Vert \vec{v} - \vec{w}_{\textrm{float}} \Vert^2_2 \label{eq:new_conv_opt_prob} \\
\textrm{subject to} &~\Vert \vec{v} \Vert_1 \leq T
\end{align}
It is important to first note that if $\Vert \vec{w}_{\textrm{float}} \Vert_1 \leq T$, then the optimal solution to Eq.~\ref{eq:new_conv_opt_prob} is trivially $\vec{v}^* = \vec{w}_{\textrm{float}}$.
In addition, when $\Vert \vec{w}_{\textrm{float}} \Vert_1 > T$, the optimal solution $\vec{v}^*$ lies on the boundary of the constrained set such that $\Vert \vec{v}^* \Vert_1 = T$.
This allows us leverage the optimal solution derived in~\citealt{duchi2008efficient}, which efficiently projects $\vec{w}_{\textrm{float}}$ onto an $\ell_1$-ball of radius $T$ using Eq.~\ref{eq:l1_proj_init_def}.
\begin{align}
\vec{v}^* &~= \sign(\vec{w}_{\textrm{float}})\left( \abs{\vec{w}_{\textrm{float}}}- \theta \right)_{+} \label{eq:l1_proj_init_def}
\end{align}
Here, $(\cdot)_{+}$ denotes the rectified linear unit, which zeroes out all negative values, and $\theta$ is a Lagrangian scalar derived from the optimal solution.
{We direct the reader to~\citealt{duchi2008efficient} for the associated proofs and derivations.}

\subsection{Constructing A2Q+}
\label{sec:a2q_v2}

Similar to A2Q, our quantizer is inspired by weight normalization~\cite{salimans2016weight} and leverages the re-parameterization given in Eq.~\ref{eq:weight_norm_reparam}.
However, unlike A2Q, we are unable to simply constrain scalar parameter~$g$ according to Eq.~\ref{eq:new_bound_prop_1} because Prop.~\ref{prop:upper_bound_on_l1_norm} relies on the assumption that $Q(\vec{w})$ is zero-centered such that $\textstyle \sum_i Q(w_i)= 0$, which is not inherently guaranteed.

Enforcing such a zero-centering constraint on a vector of integers is non-trivial in practice.
Emulating quantization during QAT adds to this complexity as integer-quantized weights $Q(\vec{w})$ are a function of floating-point counterpart~$\vec{w}$.
However, A2Q is able to guarantee the $\ell_1$-norm constraint on $Q(\vec{w})$ by constraining norm parameter $g$, then rounding the scaled floating-point weights $\vec{w} / s$ towards zero, which ensures that $\Vert Q(\vec{w}) \Vert_1 \leq \Vert \vec{w} \Vert_1$.
We similarly exploit this property to enforce our zero-centering constraint, as formally articulated in the following proposition. \\

\begin{restatable}{proposition}{propTwo}
{Let $\vec{x}$ be a vector of $N$-bit integers such that the $i$-th element $x_i$ lies within the closed interval $[c,d]$ and $d-c = 2^N - 1$.
Let $\vec{w}$ be a zero-centered vector such that~$\textstyle \sum_i w_i = 0$.
Let $Q(\vec{w})$ be a symmetric quantizer parameterized in the form of Eq.~\ref{eq:a2q_v1_quantizer}, where~$Q(\vec{w}) = s \cdot \vec{q}$ for strictly positive scaling factor $s$ and integer-quantized weight vector $\vec{q}$.
Given that $\textrm{\emph{sign}}(s \cdot q_i) = \textrm{\emph{sign}}({w}_i)$ and $\vert s \cdot q_i \vert \leq \vert {w_i} \vert$ for all $i$, then $\vec{x}^T \vec{q}$ can be safely accumulated into a signed $P$-bit register without overflow if $\vec{w} / s$ satisfies all necessary conditions for such a constraint.}
\label{prop:n2}
\end{restatable}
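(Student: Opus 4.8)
The plan is to show that the overflow-avoidance guarantee established for a zero-centered integer vector in Proposition~\ref{prop:upper_bound_on_l1_norm} transfers to the quantized vector $\vec{q}$, even though $\vec{q}$ itself need not be zero-centered. Writing the two's complement range as $[-2^{P-1}, 2^{P-1}-1]$, it suffices to prove $\abs{\vec{x}^T \vec{q}} \leq 2^{P-1}-1$ for every $\vec{x}$ with $x_i \in [c,d]$. I would first observe that, since $\vec{x} \mapsto \vec{x}^T \vec{q}$ is linear and $\vec{x}$ ranges over the box $[c,d]^K$, both the maximum and the minimum are attained at vertices: the maximizing $\vec{x}$ sets $x_i = d$ wherever $q_i \geq 0$ and $x_i = c$ wherever $q_i < 0$, with the symmetric choice for the minimizer.

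The key reduction step is to compare these worst cases against those of the floating-point weights. Because $\sign(s q_i) = \sign(w_i)$ for all $i$, the endpoint picked by the worst-case $\vec{x}$ is identical for $\vec{q}$ and for $\vec{w}$; and because $\abs{s q_i} \leq \abs{w_i}$ with $c \leq 0 \leq d$ for the standard signed and unsigned ranges, each term of the worst-case sum for $s\vec{q}$ is dominated by the corresponding term for $\vec{w}$. I would make this precise term by term to conclude
\begin{equation}
\max_{\vec{x}} \vec{x}^T (s\vec{q}) \leq \max_{\vec{x}} \vec{x}^T \vec{w}, \qquad \min_{\vec{x}} \vec{x}^T(s\vec{q}) \geq \min_{\vec{x}} \vec{x}^T \vec{w}.
\end{equation}
This is where both hypotheses are used simultaneously, and it is the crux of the argument: sign preservation aligns the active vertex, while magnitude non-increase shrinks each contribution.

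With the reduction in hand, the remaining task is to control the worst case of the zero-centered floating-point vector, which is exactly the setting of Proposition~\ref{prop:upper_bound_on_l1_norm}. I would reuse its argument: writing $x_i = \mu + t_i$ with $\mu = (c+d)/2$ and $\abs{t_i} \leq (d-c)/2 = (2^N-1)/2$, the shifted term $\mu \sum_i w_i$ vanishes because $\vec{w}$ is zero-centered, so H\"older's inequality gives $\abs{\vec{x}^T \vec{w}} \leq \tfrac{2^N-1}{2}\Vert \vec{w}\Vert_1$. Invoking the hypothesis that $\vec{w}/s$ satisfies the norm condition of Proposition~\ref{prop:upper_bound_on_l1_norm}, namely $\Vert \vec{w}/s\Vert_1 \leq (2^P-2)/(2^N-1)$, yields $\tfrac{1}{s}\abs{\vec{x}^T \vec{w}} \leq 2^{P-1}-1$. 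Chaining this with the reduction step gives $\abs{\vec{x}^T \vec{q}} = \tfrac{1}{s}\abs{\vec{x}^T(s\vec{q})} \leq 2^{P-1}-1$ for all admissible $\vec{x}$, so the accumulation never leaves the $P$-bit range.

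I expect the reduction step to be the main obstacle, precisely because $\vec{q}$ is generally not zero-centered and so Proposition~\ref{prop:upper_bound_on_l1_norm} cannot be applied to $\vec{q}$ directly; the entire point is to transport the bound from $\vec{w}$ to $\vec{q}$. The delicate part is verifying the term-by-term domination when $w_i < 0$, where $q_i$ has smaller magnitude but the active endpoint is $c \leq 0$, so the inequality only runs the right way once $c \leq 0 \leq d$ is used. I would therefore state this standard-range assumption explicitly, since for an interval lying strictly on one side of zero the comparison can fail even when the $\ell_1$ budget is met.
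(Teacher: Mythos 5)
Your proof is correct, and its crux coincides with the paper's: the observation that sign preservation forces the worst-case input to pick the \emph{same} endpoint of $[c,d]$ for $\vec{q}$ as for $\vec{w}$, while magnitude domination then gives a term-by-term comparison of the two sums, is exactly Lemma~\ref{lem:my_lemma_1} of Appendix~\ref{appendix:prop_2_proof}, applied at the maximizer $\vec{\mu}$ and minimizer $\vec{\nu}$. Where you diverge is in the decomposition. The paper never collapses the problem to a single two-sided bound; it keeps the four necessary conditions (Eqs.~\ref{eq:new_bound_prop_1}, \ref{eq:ineq_1}, \ref{eq:ineq_2} and~\ref{eq:ineq_3}) and shows each transfers from $\vec{w}/s$ to $\vec{q}$ separately, reusing Proposition~\ref{prop:upper_bound_on_l1_norm} as a black box. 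You instead transfer only the max and the min, and then re-derive the accumulator containment for the zero-centered $\vec{w}$ from scratch: writing $x_i = \mu + t_i$ with $\mu = (c+d)/2$ so that the mean term vanishes by $\sum_i w_i = 0$, followed by H\"older's inequality. That midpoint-shift argument is a genuinely different (and tidier) derivation than the paper's vertex computation in Appendix~\ref{appendix:prop_1_proof}, which splits $\oneNorm{q}$ into positive and negative partial sums $\alpha$ and $\beta$. Your route buys three things: it needs only the $\ell_1$-norm hypothesis on $\vec{w}/s$ together with zero-centering rather than all four conditions, it establishes the slightly stronger symmetric bound $\abs{\vec{x}^T\vec{q}} \leq 2^{P-1}-1$, and it surfaces explicitly the assumption $c \leq 0 \leq d$, which the paper uses only implicitly by fixing $[c,d]$ to the standard signed or unsigned integer ranges inside its proof --- your closing remark that the term-by-term domination can fail for an interval strictly on one side of zero is a real caveat the paper glosses over. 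What the paper's route buys in exchange is fidelity to the proposition's literal phrasing (``satisfies all necessary conditions''), and a single reusable lemma invoked three times in place of a fresh vertex analysis for each range inequality.
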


\begin{figure*}[t!]
\includegraphics[width=\linewidth]{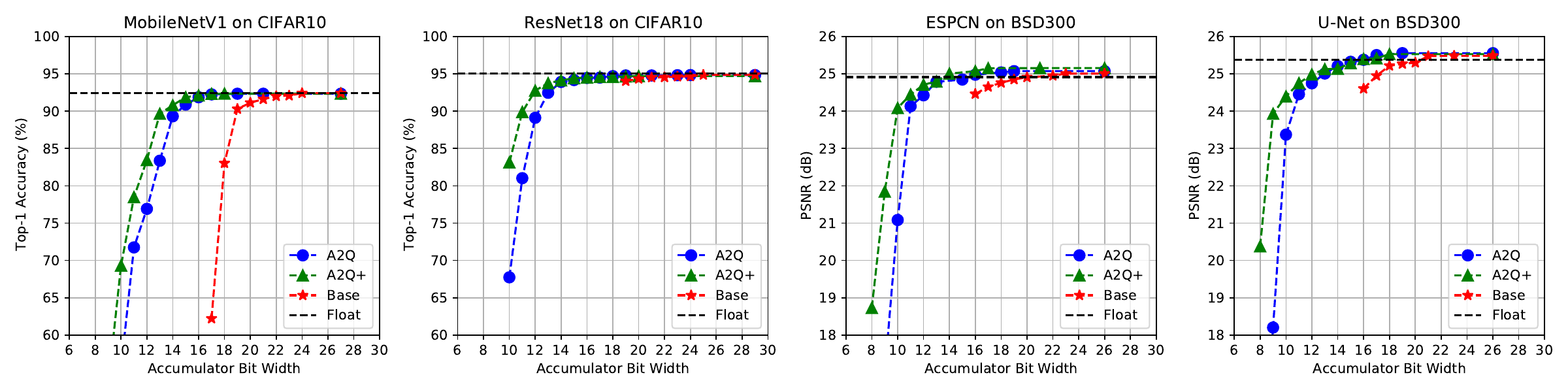}
\caption{We visualize the trade-off between accumulator bit width and model accuracy using Pareto frontier. We observe that A2Q+ (\textcolor{ForestGreen}{green triangles}) dominates both A2Q (\textcolor{blue}{blue circles}) and the baseline QAT (\textcolor{red}{red stars}) in all benchmarks.}
\label{fig:per_acc_dt}
\end{figure*}

The proof of this proposition, as well as a formal definition of the necessary accumulator constraint conditions, is provided in Appendix~\ref{appendix:prop_2_proof}.
{Based on Prop.~\ref{prop:n2}, we are able to enforce our zero-centering constraint on~$\vec{w}$ without compromising overflow avoidance, assuming we maintain a symmetric quantizer that rounds towards zero.
However, rather than directly zero-centering~$\vec{w}$  when leveraging the re-parameterization given by Eq.~\ref{eq:weight_norm_reparam}, we enforce our constraint on~$\vec{v}$ so as to control its $\ell_1$-norm.
Given that $\gamma = g / \Vert \vec{v} \Vert_1$ and $\gamma \vec{v} = \vec{w}$, it follows that $\textstyle \sum_i w_i = 0$ when $\textstyle \sum_i v_i = 0$.
Thus, we construct our quantizer as follows:}
\begin{align}
Q(\vec{w}) :=  & ~s \cdot \textrm{clip} \left(\left\lfloor \frac{\vec{w}}{s} \right\rfloor; n, p \right) \label{eq:a2q_v2_quantizer} \\
\textrm{where } \vec{w} = & ~\frac{\vec{v} - \mu_{\vec{v}}}{\oneNormNoVec{\vec{v} - \mu_{\vec{v}}}} \cdot \min(g, T_{+}) \\
\textrm{and } {\mu}_{\vec{v}} = &~ \frac{1}{K} \sum_{i=1}^K v_i \\
\textrm{and } T_{+} = & ~s \cdot \frac{2^P - 2}{2^N - 1}
\end{align}
To maintain a symmetric quantizer, we eliminate the zero points in our mapping such that~$z=0$.
We also use an exponential parameterization of both the scaling factor $s = 2^d$ and norm parameter $g = 2^t$, where $d$ and $t$ are defined per-output channel and learned through gradient descent.
Note that norm parameter $g$ is clipped to $T_+$, which is our new upper bound defined in Eq.~\ref{eq:new_bound_prop_1} scaled by $s$ to ensure $\Vert \vec{w} / s \Vert_1 \leq T_+$.
Our scaled floating-point weights are then rounded towards zero, denoted by $\lfloor \cdot \rfloor$.
The rounded weights are then clipped and re-scaled.
When updating learnable parameters throughout training, we use the straight-through estimator~\cite{bengio2013estimating} to allow gradients to permeate the rounding function, where $\nabla_x \lfloor x \rfloor = 1$ everywhere and $\nabla_x$ denotes the gradient with respect to $x$.

Extending our Euclidean projection-based initialization strategy to A2Q+ is non-trivial in practice.
In such a scenario, our optimization problem is instead subject to the following constraint:  $\Vert \vec{v} - \mu_{\vec{v}} \Vert_1 \leq T_+$.
Furthermore, the optimal solution derived by~\citealt{duchi2008efficient} requires each non-zero component of the optimal solution $\vec{v}^*$ to share the same sign as its counterpart in $\vec{w}_{\textrm{float}}$, which is not inherently guaranteed due to our zero-centering constraint.
Therefore, we initialize all A2Q+ networks using the A2Q initialization in the scope of this work.
In practice, we observe this still significantly reduces initial weight quantization error for A2Q+ networks.
In Appendix~\ref{appendix:l1_proj_ablation}, we provide a deeper investigation for both A2Q and A2Q+ networks.

\section{Experimental Results}
\label{sec:experimental_results}

\textbf{Models \& Datasets.} Throughout our experiments, we focus on two computer vision tasks: image classification and single-image super resolution.
In Section~\ref{sec:per_acc_dt}, we evaluate MobileNetV1~\cite{howard2017mobilenets} and ResNet18~\cite{he2016deep} trained on the CIFAR-10 dataset~\cite{krizhevsky2009learning} for image classification, and ESPCN~\cite{shi2016real} and U-Net~\cite{ronneberger2015u} trained on the BSD300 dataset~\cite{martin2001database} for super resolution.
In Section~\ref{sec:more_models}, we evaluate larger image classification benchmarks, namely ResNet18, ResNet34, and ResNet50 trained on the ImageNet-1K dataset~\cite{deng2009imagenet}.

\textbf{Quantization Design Space.} Following the experiments of~\citealt{colbert2023a2q}, we constrain our quantization design space to uniform-precision models such that every hidden layer has the same weight, activation, and accumulator bit width, respectively denoted as $M$, $N$, and $P$.
Our experiments consider $3$- to $8$-bit integers for both weights and activations, extending the quantization design space of ~\citealt{colbert2023a2q} by $4 \times$.
For each of the $64$ weight and activation combinations, we calculate the most conservative accumulator bit width for each model using Eq.~\ref{eq:data_type_bound}, as derived by~\citealt{colbert2023a2q}.
Here, $\lceil \cdot \rceil$ denotes ceiling rounding and $K^* = \argmax_{K_l} \{ K_l \}^L_{l=1}$, where $K_l$ is the dot product size of layer $l$ in a network with $L$ layers.
{We calculate $P^*$ for each unique $(M,N)$ combination and evaluate up to a $10$-bit reduction in accumulator bit width, creating a total of $640$ unique configurations per model.
We repeat each experiment 3 times using different random seeds.}
\begin{align}
P^* = & ~\left\lceil \alpha + \phi(\alpha)  + 1 \right\rceil \label{eq:data_type_bound} \\
\alpha = & ~\log_2(K^*) + N + M - 1 - \mathbbm{1}_{\rm signed}(\vec{x}) \\
\phi(\alpha) = & ~\log_2(1 + 2^{-\alpha})
\end{align}
We implement A2Q+ in PyTorch~\cite{paszke2019pytorch} using v0.10 of the Brevitas quantization library~\cite{brevitas} and leverage their implementations of A2Q and baseline QAT methods for benchmarking.
We include all training details and hyperparameters in Appendix~\ref{appendix:hyperparams}.

\begin{table*}[t!]
\vspace{-0.3cm}
\caption{We provide the test top-1 accuracy and quantization configuration for some of the Pareto-optimal image classification models that form a section of the frontiers visualized in Fig.~\ref{fig:per_acc_dt}. We highlight the Pareto-dominant points in \textcolor{ForestGreen}{green}.}
\centering
\resizebox{\textwidth}{!}{\begin{tabular}{|c|cccccc||cccccc|}
\hline
\multirow{3}{*}{$\textbf{P}$} & \multicolumn{6}{c||}{\thead{\textbf{MobileNetV1} \\ (Float: 92.43\%)}} & \multicolumn{6}{c|}{\thead{\textbf{ResNet18} \\ (Float: 95.00\%) }}\\ \cline{2-13} 
 & \multicolumn{2}{c|}{\textbf{Base}} & \multicolumn{2}{c|}{\textbf{A2Q}} & \multicolumn{2}{c||}{\textbf{A2Q+}}    & \multicolumn{2}{c|}{\textbf{Base}} & \multicolumn{2}{c|}{\textbf{A2Q}} & \multicolumn{2}{c|}{\textbf{A2Q+}}    \\ \cline{2-13} 
 & \multicolumn{1}{c|}{\textbf{Top-1}} & \multicolumn{1}{c|}{\textbf{$(M, N)$}} & \multicolumn{1}{c|}{\textbf{Top-1}} & \multicolumn{1}{c|}{\textbf{$(M, N)$}} & \multicolumn{1}{c|}{\textbf{Top-1}} & \textbf{$(M, N)$} & \multicolumn{1}{c|}{\textbf{Top-1}} & \multicolumn{1}{c|}{\textbf{$(M, N)$}} & \multicolumn{1}{c|}{\textbf{Top-1}} & \multicolumn{1}{c|}{\textbf{$(M, N)$}} & \multicolumn{1}{c|}{\textbf{Top-1}} & \textbf{$(M, N)$} \\ \hline \hline
11 & \multicolumn{1}{c|}{-} & \multicolumn{1}{c|}{-} & \multicolumn{1}{c|}{71.8\%} & \multicolumn{1}{c|}{(5,4)} & \multicolumn{1}{c|}{\mycontour{78.5\%}} & {\mycontour{(4,5)}} & \multicolumn{1}{c|}{-} & \multicolumn{1}{c|}{-} & \multicolumn{1}{c|}{81.0\%} & \multicolumn{1}{c|}{(3,3)} & \multicolumn{1}{c|}{\mycontour{89.9\%}} & \mycontour{(3,3)} \\ \hline
12 & \multicolumn{1}{c|}{-} & \multicolumn{1}{c|}{-} & \multicolumn{1}{c|}{76.9\%} & \multicolumn{1}{c|}{(4,5)} & \multicolumn{1}{c|}{\mycontour{83.5\%}} & \mycontour{(4,6)} & \multicolumn{1}{c|}{-} & \multicolumn{1}{c|}{-} & \multicolumn{1}{c|}{89.1\%} & \multicolumn{1}{c|}{(3,3)} & \multicolumn{1}{c|}{\mycontour{92.8\%}} & \mycontour{(3,3)} \\ \hline
13 & \multicolumn{1}{c|}{-} & \multicolumn{1}{c|}{-} & \multicolumn{1}{c|}{83.4\%} & \multicolumn{1}{c|}{(6,5)} & \multicolumn{1}{c|}{\mycontour{89.7\%}} & \mycontour{(3,6)} & \multicolumn{1}{c|}{-} & \multicolumn{1}{c|}{-} & \multicolumn{1}{c|}{92.5\%} & \multicolumn{1}{c|}{(3,3)} & \multicolumn{1}{c|}{\mycontour{93.8\%}} & \mycontour{(4,3)}  \\ \hline
14 & \multicolumn{1}{c|}{-} & \multicolumn{1}{c|}{-} & \multicolumn{1}{c|}{89.3\%} & \multicolumn{1}{c|}{(3,6)} & \multicolumn{1}{c|}{\mycontour{90.8\%}} & \mycontour{(3,7)} & \multicolumn{1}{c|}{-} & \multicolumn{1}{c|}{-} & \multicolumn{1}{c|}{93.9\%} & \multicolumn{1}{c|}{(4,3)} & \multicolumn{1}{c|}{\mycontour{94.1\%}} &  \mycontour{(4,4)} \\ \hline
15 & \multicolumn{1}{c|}{-} & \multicolumn{1}{c|}{-} & \multicolumn{1}{c|}{90.9\%} & \multicolumn{1}{c|}{(4,7)} & \multicolumn{1}{c|}{\mycontour{91.9\%}} & \mycontour{(4,7)} & \multicolumn{1}{c|}{-} & \multicolumn{1}{c|}{-} & \multicolumn{1}{c|}{94.2\%} & \multicolumn{1}{c|}{(4,4)} & \multicolumn{1}{c|}{\mycontour{94.4\%}} & \mycontour{(5,5)}  \\ \hline
16 & \multicolumn{1}{c|}{-} & \multicolumn{1}{c|}{-} & \multicolumn{1}{c|}{91.9\%} & \multicolumn{1}{c|}{(4,7)} & \multicolumn{1}{c|}{\mycontour{92.1\%}} & \mycontour{(5,8)} & \multicolumn{1}{c|}{-} & \multicolumn{1}{c|}{-} & \multicolumn{1}{c|}{94.4\%} & \multicolumn{1}{c|}{(4,4)} & \multicolumn{1}{c|}{\mycontour{94.5\%}} & \mycontour{(3,5)}  \\ \hline
17 & \multicolumn{1}{c|}{62.2\%} & \multicolumn{1}{c|}{(3,3)} & \multicolumn{1}{c|}{92.2\%} & \multicolumn{1}{c|}{(4,8)} & \multicolumn{1}{c|}{\mycontour{92.3\%}} & \mycontour{(6,8)} & \multicolumn{1}{c|}{-} & \multicolumn{1}{c|}{-} & \multicolumn{1}{c|}{94.5\%} & \multicolumn{1}{c|}{(5,5)} & \multicolumn{1}{c|}{\mycontour{94.6\%}} & \mycontour{(6,6)} \\ \hline
18 & \multicolumn{1}{c|}{83.0\%} & \multicolumn{1}{c|}{(3,4)} & \multicolumn{1}{c|}{{92.2\%}} & \multicolumn{1}{c|}{{(4,8)}} & \multicolumn{1}{c|}{\mycontour{92.4\%}} & \mycontour{(6,8)} & \multicolumn{1}{c|}{-} & \multicolumn{1}{c|}{-} & \multicolumn{1}{c|}{\mycontour{94.7\%}} & \multicolumn{1}{c|}{\mycontour{(4,5)}} & \multicolumn{1}{c|}{94.6\%} & (6,7) \\ \hline
19 & \multicolumn{1}{c|}{90.3\%} & \multicolumn{1}{c|}{(3,5)} & \multicolumn{1}{c|}{92.3\%} & \multicolumn{1}{c|}{(6,8)} & \multicolumn{1}{c|}{\mycontour{92.4\%}} & \mycontour{(6,8)} & \multicolumn{1}{c|}{94.0\%} & \multicolumn{1}{c|}{(3,3)} & \multicolumn{1}{c|}{\mycontour{94.8\%}} & \multicolumn{1}{c|}{\mycontour{(4,7)}} & \multicolumn{1}{c|}{94.7\%} & (6,8) \\ \hline
20 & \multicolumn{1}{c|}{91.1\%} & \multicolumn{1}{c|}{(3,6)} & \multicolumn{1}{c|}{{92.3\%}} & \multicolumn{1}{c|}{{(6,8)}} & \multicolumn{1}{c|}{\mycontour{92.4\%}} & \mycontour{(8,8)} & \multicolumn{1}{c|}{94.3\%} & \multicolumn{1}{c|}{(3,4)} & \multicolumn{1}{c|}{\mycontour{94.8\%}} & \multicolumn{1}{c|}{\mycontour{(4,7)}} & \multicolumn{1}{c|}{94.7\%} & (8,8) \\ \hline
\end{tabular}}
\label{tbl:pareto_cifar10}
\end{table*}

\begin{table*}[t!]
\vspace{-0.3cm}
\caption{We provide the test peak signal-to-noise ratio (PSNR) and quantization configuration for some of the Pareto-optimal super resolution models that form a section of the frontiers visualized in Fig.~\ref{fig:per_acc_dt}. We highlight the Pareto-dominant points in \textcolor{ForestGreen}{green}.}
\centering
\resizebox{\textwidth}{!}{\begin{tabular}{|c|cccccc||cccccc|}
\hline
\multirow{3}{*}{$\textbf{P}$} & \multicolumn{6}{c||}{\thead{\textbf{ESPCN} \\ (Float: 24.91)}} & \multicolumn{6}{c|}{\thead{\textbf{U-Net} \\ (Float: 25.37) }}\\ \cline{2-13} 
 & \multicolumn{2}{c|}{\textbf{Base}} & \multicolumn{2}{c|}{\textbf{A2Q}} & \multicolumn{2}{c||}{\textbf{A2Q+}}    & \multicolumn{2}{c|}{\textbf{Base}} & \multicolumn{2}{c|}{\textbf{A2Q}} & \multicolumn{2}{c|}{\textbf{A2Q+}}    \\ \cline{2-13} 
 & \multicolumn{1}{c|}{\textbf{PSNR}} & \multicolumn{1}{c|}{\textbf{$(M, N)$}} & \multicolumn{1}{c|}{\textbf{PSNR}} & \multicolumn{1}{c|}{\textbf{$(M, N)$}} & \multicolumn{1}{c|}{\textbf{PSNR}} & \textbf{$(M, N)$} & \multicolumn{1}{c|}{\textbf{PSNR}} & \multicolumn{1}{c|}{\textbf{$(M, N)$}} & \multicolumn{1}{c|}{\textbf{PSNR}} & \multicolumn{1}{c|}{\textbf{$(M, N)$}} & \multicolumn{1}{c|}{\textbf{PSNR}} & \textbf{$(M, N)$} \\ \hline \hline
9 & \multicolumn{1}{c|}{-} & \multicolumn{1}{c|}{-} & \multicolumn{1}{c|}{17.0} & \multicolumn{1}{c|}{(4,3)} & \multicolumn{1}{c|}{\mycontour{21.9}} & \multicolumn{1}{c||}{\mycontour{(3,3)}} & \multicolumn{1}{c|}{-} & \multicolumn{1}{c|}{-} & \multicolumn{1}{c|}{18.2} & \multicolumn{1}{c|}{(5,3)} & \multicolumn{1}{c|}{\mycontour{23.9}} & \mycontour{(5,3)} \\ \hline
10 & \multicolumn{1}{c|}{-} & \multicolumn{1}{c|}{-} & \multicolumn{1}{c|}{21.1} & \multicolumn{1}{c|}{(4,3)} & \multicolumn{1}{c|}{\mycontour{24.1}} & \multicolumn{1}{c||}{\mycontour{(4,3)}} & \multicolumn{1}{c|}{-} & \multicolumn{1}{c|}{-} & \multicolumn{1}{c|}{23.4} & \multicolumn{1}{c|}{(4,3)} & \multicolumn{1}{c|}{\mycontour{24.4}} & \mycontour{(3,3)} \\ \hline
11 & \multicolumn{1}{c|}{-} & \multicolumn{1}{c|}{-} & \multicolumn{1}{c|}{24.1} & \multicolumn{1}{c|}{(6,3)} & \multicolumn{1}{c|}{\mycontour{24.4}} & {\mycontour{(4,3)}} & \multicolumn{1}{c|}{-} & \multicolumn{1}{c|}{-} & \multicolumn{1}{c|}{24.5} & \multicolumn{1}{c|}{(6,3)} & \multicolumn{1}{c|}{\mycontour{24.8}} & \mycontour{(5,4)} \\ \hline
12 & \multicolumn{1}{c|}{-} & \multicolumn{1}{c|}{-} & \multicolumn{1}{c|}{24.4} & \multicolumn{1}{c|}{(7,3)} & \multicolumn{1}{c|}{\mycontour{24.7}} & \mycontour{(4,4)} & \multicolumn{1}{c|}{-} & \multicolumn{1}{c|}{-} & \multicolumn{1}{c|}{24.7} & \multicolumn{1}{c|}{(4,4)} & \multicolumn{1}{c|}{\mycontour{25.0}} & \mycontour{(3,5)} \\ \hline
13 & \multicolumn{1}{c|}{-} & \multicolumn{1}{c|}{-} & \multicolumn{1}{c|}{\mycontour{24.8}} & \multicolumn{1}{c|}{\mycontour{(7,4)}} & \multicolumn{1}{c|}{\mycontour{24.8}} & \mycontour{(5,5)} & \multicolumn{1}{c|}{-} & \multicolumn{1}{c|}{-} & \multicolumn{1}{c|}{25.0} & \multicolumn{1}{c|}{(8,4)} & \multicolumn{1}{c|}{\mycontour{25.1}} & \mycontour{(7,5)}  \\ \hline
14 & \multicolumn{1}{c|}{-} & \multicolumn{1}{c|}{-} & \multicolumn{1}{c|}{24.8} & \multicolumn{1}{c|}{(7,4)} & \multicolumn{1}{c|}{\mycontour{25.0}} & \mycontour{(6,5)} & \multicolumn{1}{c|}{-} & \multicolumn{1}{c|}{-} & \multicolumn{1}{c|}{\mycontour{25.2}} & \multicolumn{1}{c|}{\mycontour{(8,5)}} & \multicolumn{1}{c|}{\mycontour{25.2}} &  \mycontour{(8,5)} \\ \hline
15 & \multicolumn{1}{c|}{-} & \multicolumn{1}{c|}{-} & \multicolumn{1}{c|}{24.9} & \multicolumn{1}{c|}{(4,5)} & \multicolumn{1}{c|}{\mycontour{25.0}} & \mycontour{(6,5)} & \multicolumn{1}{c|}{-} & \multicolumn{1}{c|}{-} & \multicolumn{1}{c|}{\mycontour{25.3}} & \multicolumn{1}{c|}{\mycontour{(8,6)}} & \multicolumn{1}{c|}{\mycontour{25.3}} & \mycontour{(6,6)}  \\ \hline
16 & \multicolumn{1}{c|}{24.5} & \multicolumn{1}{c|}{(3,3)} & \multicolumn{1}{c|}{25.0} & \multicolumn{1}{c|}{(6,6)} & \multicolumn{1}{c|}{\mycontour{25.1}} & \mycontour{(6,7)} & \multicolumn{1}{c|}{24.6} & \multicolumn{1}{c|}{(3,3)} & \multicolumn{1}{c|}{\mycontour{25.4}} & \multicolumn{1}{c|}{\mycontour{(8,6)}} & \multicolumn{1}{c|}{\mycontour{25.4}} & \mycontour{(6,6)}  \\ \hline
17 & \multicolumn{1}{c|}{24.7} & \multicolumn{1}{c|}{(3,4)} & \multicolumn{1}{c|}{25.0} & \multicolumn{1}{c|}{(6,6)} & \multicolumn{1}{c|}{\mycontour{25.2}} & \mycontour{(8,7)} & \multicolumn{1}{c|}{25.0} & \multicolumn{1}{c|}{(3,4)} & \multicolumn{1}{c|}{\mycontour{25.5}} & \multicolumn{1}{c|}{\mycontour{(4,8)}} & \multicolumn{1}{c|}{\mycontour{25.5}} & \mycontour{(6,8)} \\ \hline
18 & \multicolumn{1}{c|}{24.8} & \multicolumn{1}{c|}{(3,5)} & \multicolumn{1}{c|}{25.0} & \multicolumn{1}{c|}{(6,7)} & \multicolumn{1}{c|}{\mycontour{25.2}} & \mycontour{(8,7)} & \multicolumn{1}{c|}{25.2} & \multicolumn{1}{c|}{(3,5)} & \multicolumn{1}{c|}{\mycontour{25.5}} & \multicolumn{1}{c|}{\mycontour{(4,8)}} & \multicolumn{1}{c|}{\mycontour{25.5}} & \mycontour{(6,8)} \\ \hline
\end{tabular}}
\label{tbl:pareto_bsd300}
\end{table*}

\subsection{Optimizing for Accumulator Constraints}
\label{sec:per_acc_dt}

\begin{figure*}[t!]
\centering
\subfloat[A2Q]{
    \includegraphics[width=0.96\linewidth]{
        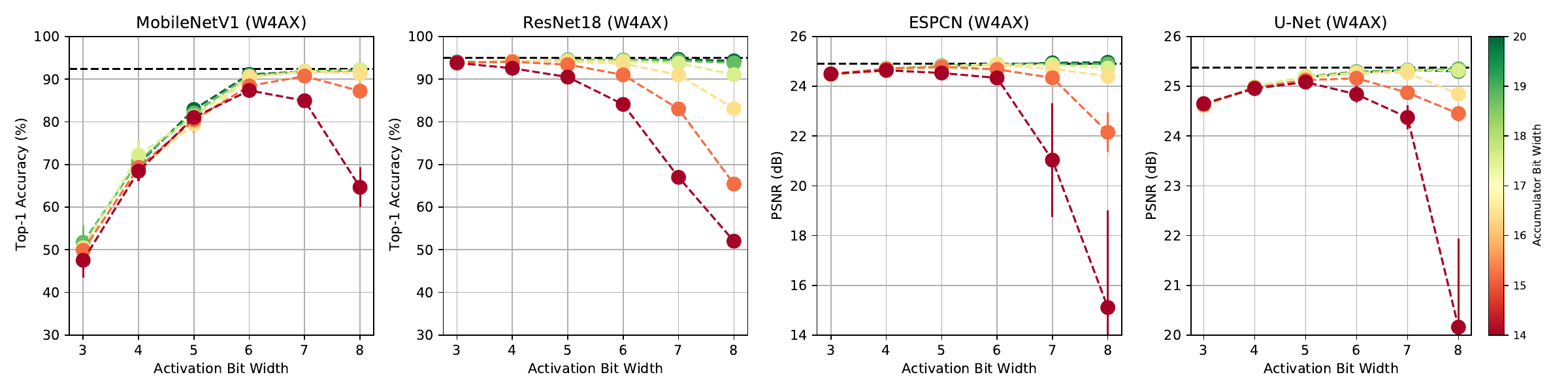
    }
} \\
\subfloat[A2Q+]{
    \includegraphics[width=0.96\linewidth]{
        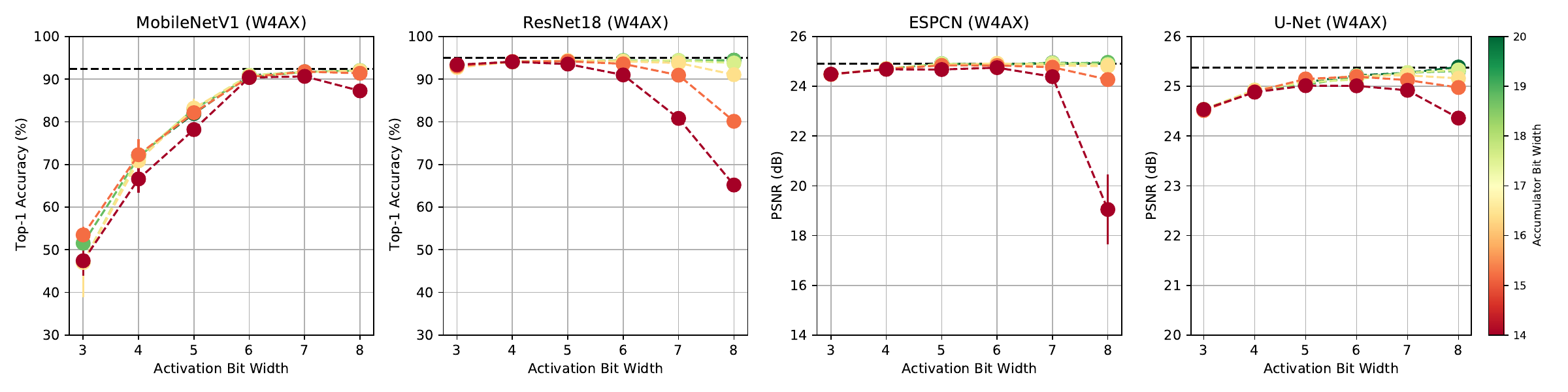
    }
}
\caption{We evaluate the trade-off between activation bit width~$N$ and model accuracy under fixed accumulator constraints.
We visualize the average and standard deviation in model accuracy measured over 3 experiments as~$N$ is increased from $3$ to $8$ bits when targeting accumulator widths that range from $14$ to $20$ bits.
The weights of all hidden layers are fixed to $4$-bits.}
\label{fig:w4ax}
\end{figure*}

{Following the benchmarking strategy in~\citealt{colbert2023a2q}, we first optimize QNNs for accumulator-constrained processors.
Here, the goal is to maximize model accuracy\footnote{We loosely use the term \textit{model accuracy} to also describe peak signal-to-noise ratio (PSNR) for convenience of discussion.} given a target accumulator bit width $P$.
This scenario has implications for accelerating inference on general-purpose platforms~\cite{xie2021overflow, li2022downscaling} and reducing the computational overhead of encrypted computations~\cite{lou2019she, stoian2023deep}.}
{As an alternative to A2Q, one could also heuristically manipulating weight bit width~$M$ and activation bit width~$N$ according to Eq.~\ref{eq:data_type_bound}.}
To the best of our knowledge, this is the only other method to train a uniform-precision QNN for a given $P$ without overflow.
Therefore, we use exhaustive bit width manipulation as a baseline when comparing A2Q+ against A2Q.

{In Fig.~\ref{fig:per_acc_dt}, we visualize this comparison using Pareto frontiers and provide the 32-bit floating-point model accuracy for reference.
For each model and each QAT algorithm, the Pareto frontier provides the maximum observed model accuracy for a given target accumulator bit width $P$.
In addition, we provide a detailed breakdown of each Pareto frontier in Tables~\ref{tbl:pareto_cifar10} and~\ref{tbl:pareto_bsd300}, where we also report the weight and activation bit widths of the Pareto-dominant model.
In these experiments, all super resolution benchmarks are trained from scratch and all image classification benchmarks are initialized from pre-trained floating-point checkpoints using our Euclidean projection-based weight initialization (EP-init).
We handle depthwise separable convolutions using the technique discussed in Appendix~\ref{appendix:dws_conv_ablation}, which only impacts MobileNetV1.
It is important to note that this is not a direct comparison against ~\citealt{colbert2023a2q} because we apply EP-init to both A2Q and A2Q+ models to strictly compare weight quantizers.
However, we  provide an ablation study in Appendix~\ref{appendix:l1_proj_ablation} that shows EP-init improves both A2Q and A2Q+ by up to $+50\%$ in extremely low-precision accumulation regimes.}

{Intuitively, heuristic bit width manipulations can only reduce the accumulator bit width so far because $P$ is ultimately limited by dot product size $K$.
Alternatively, using A2Q to train QNNs directly for low-precision accumulation allows one to push the accumulator bit width lower than previously attainable without compromising overflow; yet, a trade-off still remains.
We observe that A2Q+ significantly improves this trade-off, especially in the extremely low-precision accumulation regime.
Thus, by alleviating the pressure on model weights, A2Q+ recovers model accuracy lost to the overly restrictive accumulator constraints imposed by A2Q.

Finally, we observe that the Pareto-optimal activation bit width~$N$ monotonically decreases along the frontier as the target accumulator bit width~$P$ is reduced.
We hypothesize this is in part a consequence of the alleviated pressure on~$\Vert \vec{q} \Vert_1$ discussed in Section~\ref{sec:a2q}.
To investigate this relationship, we evaluate model accuracy as we increase~$N$ with fixed $P$.
To focus on $N$ and $P$, we fix weights to $4$~bits.
Figure~\ref{fig:w4ax} shows the average accuracy for each model as we increase $N$ from $3$ to $8$ bits when targeting $14$- to $20$-bit accumulation.
As previously established, reducing $P$ invariably limits model accuracy.
However, additionally increasing $N$ continues to tighten the constraint on $\Vert \vec{q} \Vert_1$, further limiting model accuracy and introducing a non-trivial trade-off observed across neural architectures.
When compared to A2Q, A2Q+ significantly alleviates this trade-off by alleviating the constraints on~$\Vert \vec{q} \Vert_1$ for fixed $N$ and $P$, increasing the Pareto-optimal activation bit width and model accuracy.

\subsection{Low-Precision Accumulation for ImageNet Models}
\label{sec:more_models}

The $\ell_1$-norm of an unconstrained weight vector inherently grows as its dimensionality~$K$ increases.
This suggests that, with a fixed activation bit width~$N$ and target accumulator bit width~$P$, A2Q and A2Q+ scale well to deeper architectures as the accumulator constraint tightens with the width of a neural architecture rather than the depth.}

We investigate this hypothesis by evaluating larger ResNet models trained on ImageNet from pre-trained floating-point checkpoints.
Rather than exploring the full quantization design space, we focus on 4-bit weights and activations while evaluating A2Q and A2Q+ under various accumulator constraints.
We also evaluate the impact of our Euclidean projection-based weight initialization strategy (EP-init) on standard A2Q and use the standard methods discussed in Appendix~\ref{appendix:hyperparams} to provide a reference QAT baseline.
We use the pre-trained checkpoints provided by PyTorch~\cite{paszke2019pytorch} and report our results in Table~\ref{tbl:imagenet}.

{We observe that both A2Q and A2Q+ can maintain baseline accuracy when targeting $16$-bit accumulators; however, it is important to note that this is a non-trivial result.
Only about $50\%$ of the output channels in the PyTorch ResNet18 and ResNet34 checkpoints inherently satisfy a $16$-bit accumulator constraint, with less than $10\%$ satisfying 12-bit constraints (see Appendix~\ref{appendix:l1_proj_ablation}).
While A2Q is able to recover when $P=16$, we observe that EP-init significantly improves model accuracy as $P$ is reduced, with a noteable $+11.7\%$ increase in test top-1 accuracy on ResNet50 when targeting $12$-bit accumulation.
Furthermore, we observe that A2Q+ can consistently maintain over $96\%$ of the test top-1 accuracy relative to the $32$-bit floating-point baselines when targeting $14$-bit accumulators.}

Interestingly, we observe that the accuracy gap between accumulator-constrained models and their original floating-point counterparts decreases as model size increases.
Building from our hypothesis, we conjecture this is in part because the models are growing in depth but not width, which increases model capacity without tightening our constraints.

{Finally, we observe that both A2Q and A2Q+ inherently expose opportunities to exploit unstructured weight sparsity.
As shown in~\citealt{colbert2023a2q}, decreasing $P$ increases sparsity when $N$ is fixed.
Furthermore, since A2Q is more restrictive than A2Q+ for fixed $P$ and $N$, we see that A2Q can result in significantly higher sparsity levels.
We observe this gap in sparsity decreases with $P$ while the accuracy gap increases, with A2Q+ resulting in $+17\%$ top-1 accuracy with only $-6.3\%$ sparsity when compared to A2Q for $12$-bit accumulator constraints on ResNet50.

\begin{table}[b!]
\centering
\vspace{-0.3cm}
\caption{We evaluate A2Q+ for W4A4 ImageNet models  and compare against baseline QAT methods and standard A2Q, both with and without Euclidean projection-based initialization (EP-init.)}
\begin{tabular}{|c|c|c|P{0.12\linewidth}|P{0.13\linewidth}|}
\hline
{\textbf{Network}} & \textbf{Method} & {$\mathbf{P}$} & \textbf{Top-1} & \textbf{Sparsity} \\ \hline \hline
  \multirow{10}{*}{\thead{\textbf{ResNet18} \\ (Float: 69.76\%)}} & Base & 32 & 70.2\% & 20.8\%  \\ \cline{2-5} 
                  & \multirow{3}{*}{\thead{A2Q \\ }} & 16 & 69.2\% & 73.7\% \\ \cline{3-5} 
                  &                   & 14 & 60.3\% & 91.7\% \\ \cline{3-5}
                  &                   & 12 & 35.5\% & 94.7\% \\ \cline{2-5}
                  & \multirow{3}{*}{\thead{A2Q \\ (w/ EP-init)}}  & 16 & 69.3\% & 73.7\% \\ \cline{3-5} 
                  &                   & 14 & 62.5\% & 91.2\% \\ \cline{3-5}
                  &                   & 12 & 42.7\% & 94.6\% \\ \cline{2-5}
                  & \multirow{3}{*}{\mycontour{A2Q+}} & \mycontour{16} & \mycontour{69.8\%} & \mycontour{50.4\%} \\ \cline{3-5} 
                  &                   & \mycontour{14} & \mycontour{67.1\%} & \mycontour{85.0\%} \\ \cline{3-5}
                  &                   & \mycontour{12} & \mycontour{56.4\%} & \mycontour{93.4\%} \\ \hline \hline

 \multirow{10}{*}{\thead{\textbf{ResNet34} \\ (Float: 73.31\%)}} & Base & 32 & 73.4\% & 23.9\% \\ \cline{2-5} 
                  & \multirow{3}{*}{\thead{A2Q \\ }} & 16 & 73.1\% & 75.2\% \\ \cline{3-5}
                  &                   & 14 & 65.8\% & 94.5\% \\ \cline{3-5}
                  &                   & 12 & 43.4\% & 96.9\% \\ \cline{2-5}
                  & \multirow{3}{*}{\thead{A2Q \\ (w/ EP-init)}} & 16 & 73.1\% & 74.9\% \\ \cline{3-5} 
                  &                   & 14 & 67.6\% & 94.2\% \\ \cline{3-5}
                  &                   & 12 & 51.4\% & 96.8\% \\ \cline{2-5}
                  & \multirow{3}{*}{\mycontour{A2Q+}} & \mycontour{16} & \mycontour{73.3\%} & \mycontour{51.4\%} \\ \cline{3-5}
                  &                   & \mycontour{14} & \mycontour{71.4\%} & \mycontour{85.0\%} \\ \cline{3-5}
                  &                   & \mycontour{12} & \mycontour{62.1\%} & \mycontour{95.9\%} \\ \hline \hline
 
 \multirow{10}{*}{\thead{\textbf{ResNet50} \\ (Float: 76.13\%)}} & Base & 32 & 75.9\% & 25.8\% \\ \cline{2-5} 
                  & \multirow{3}{*}{\thead{A2Q \\ }} & 16 & 76.0\% & 56.1\% \\ \cline{3-5} 
                  &                   & 14 & 73.8\% & 77.2\% \\ \cline{3-5} 
                  &                   & 12 & 55.0\% & 90.7\% \\ \cline{2-5}
                  & \multirow{3}{*}{\thead{A2Q \\ (w/ EP-init)}} & 16 & 76.0\% & 56.1\% \\ \cline{3-5} 
                  &                   & 14 & 74.5\% & 77.1\% \\ \cline{3-5} 
                  &                   & 12 & 66.7\% & 88.6\% \\ \cline{2-5} 
                  & \multirow{3}{*}{\mycontour{A2Q+}} & \mycontour{16} & \mycontour{76.0\%} & \mycontour{44.0\%} \\ \cline{3-5} 
                  &                   & \mycontour{14} & \mycontour{75.7\%} & \mycontour{67.7\%} \\ \cline{3-5} 
                  &                   & \mycontour{12} & \mycontour{72.0\%} & \mycontour{84.4\%}  \\ \hline
 
\end{tabular}
\label{tbl:imagenet}
\end{table}

\section{Conclusions and Future Work}
\label{sec:discussion}

As weights and activations are increasingly represented with fewer bits, we anticipate the accumulator to play a critical role in the quantization design space.
However, while reducing the precision of the accumulator offers significant hardware efficiency improvements, it also invariably limits model accuracy by means of either numerical overflow or learning constraints~\cite{de2020quantization, ni2021wrapnet, xie2021overflow, colbert2023a2q}.
Our results show that A2Q+ significantly improves this trade-off, outperforming prior methods that guarantee overflow avoidance.

A2Q+ uses zero-centering to alleviate the $\ell_1$-norm constraints of A2Q, improving model accuracy without compromising overflow avoidance.
It is important to note that prior work has also studied benefits of zero-centering in other contexts.
\citealt{huang2017centered} show that normalizing weights to have zero mean and unit $\ell_2$-norm can stabilize pre-activation distributions and yield better-conditioned optimization problems.
\citealt{qiao2019micro} show that normalizing weights to instead have zero mean and unit variance can smooth the loss landscape and improve convergence.
\citealt{li2019additive} propose a non-uniform quantization scheme that also normalizes weights to have zero mean and unit variance and report increased training stability.}
However, this collection of favorable properties may not directly translate to A2Q+, which is a uniform quantization scheme that normalizes each output channel of the weights to have zero mean and unit $\ell_1$-norm, but we do observe that A2Q+ inherits an unfavorable property of zero-centering that seems to have been overlooked in these prior works: implicit dimensionality reduction.
We observe this to only negatively impact on depthwise separable convolutions (see  Appendix~\ref{appendix:dws_conv_ablation}).

A2Q+ uses Euclidean projections to minimize weight quantization error at initialization.
As a consequence of accumulator constraints, na\"ive initialization forces models to recover from superfluous quantization error as $P$ is reduced.
However, while our experiments show that minimizing the weight quantization error at initialization yields significant improvements in the resulting model accuracy, we do not observe increased post-training quantization performance.
Similar to~\citealt{colbert2023a2q}, this is due to the reliance on round-to-zero and the severity of accumulator constraints, which we highlight for future work.

Finally, A2Q+ introduces unstructured weight sparsity as the accumulator bit width is reduced.
Although studies have exploited unstructured sparsity to improve inference performance on both programmable logic~\cite{nurvitadhi2017can, colbert2021competitive} and general-purpose platforms~\cite{elsen2020fast, gale2020sparse}, many off-the-shelf accelerators require structured patterns to see performance benefits~\cite{mao2017exploring, mishra2021accelerating}.
We highlight controlling weight sparsity patterns for future work.

\ifdefined\isaccepted
\section*{Acknowledgements}

We would like to thank Gabor Sines, Michaela Blott, Benoit Jacob, Giuseppe Franco, Jake Daly, and Mehdi Saeedi from AMD for their feedback and support.
We would also like to thank Rayan Saab and Jinjie Zhang from UC San Diego and Alec Flowers from EPFL for insightful discussions. \\

© 2024 Advanced Micro Devices, Inc.  All rights reserved.
AMD, the AMD Arrow logo, Radeon, and combinations thereof are trademarks of Advanced Micro Devices, Inc.
Other product names used in this publication are for identification purposes only and may be trademarks of their respective companies.
\fi

\bibliography{references}
\bibliographystyle{icml2024}

\clearpage
\appendix

\section{Proofs}

\subsection{Proof of Proposition~\ref{prop:upper_bound_on_l1_norm}}
\label{appendix:prop_1_proof}

Let weights $\vec{q}$ be a $K$-dimensional vector of $M$-bit integers, and let
$\mathbb{Z}^K_N$ denote the set of all $K$-dimensional vectors of $N$-bit integers.
To prove Prop.~\ref{prop:upper_bound_on_l1_norm}, restated below for completeness, we examine the vectors that maximize and minimize the dot product of $\vec{q}$ by any $\vec{x} \in \mathbb{Z}^K_N$ and directly derive our result by exhaustively evaluating each case.
Without loss of generality, we assume a two's complement representation for signed integers in our work as is common practice~\cite{wu2020integer, gholami2021survey}. \\

\propOne*

\begin{proof}
Let $\alpha$ denote the sum of all positive elements of~$\vec{q}$ and let $\beta$ denote the sum of all negative elements of $\vec{q}$.
It follows that $\alpha + \beta = 0$ (property of the zero-centered vector) and $\alpha - \beta = \oneNorm{q}$ (property of the $\ell_1$-norm).
This yields the following relationships: $\alpha = -\beta = \frac{1}{2} \oneNorm{\vec{q}}$.

Let the closed interval $[e , f]$ denote the output range of the dot product of $\vec{q}$ by any $\vec{x} \in \mathbb{Z}^K_N$, where $f \geq e$.
To safely use a signed $P$-bit accumulator without overflow, all of the following inequalities need to be satisfied:
\begin{align}
f \leq &~ 2^{P - 1} - 1 \label{eq:ineq_1} \\
-e \leq &~ 2^{P - 1} \label{eq:ineq_2} \\
f - e \leq &~ 2^P - 1 \label{eq:ineq_3}
\end{align}
We start with the first inequality, Eq.~\ref{eq:ineq_1}.
Since the value of each input element $x_i$ is bounded to the closed interval $[c,d]$, the maximizing vector $\vec{\mu}$ is defined as:
\begin{equation}
\vec{\mu} = \argmax_{\vec{x}} ~\vec{x}^T \vec{q} = \begin{cases}
d, & \text{where } q_i \geq 0 \\
c, & \text{where } q_i < 0
\end{cases}
\label{eq:mu_values}
\end{equation}
Exploiting the identities of $\alpha$, $\beta$, $c$, $d$, and $f$, we can derive the following upper bound on the $\ell_1$-norm of $\vec{q}$:
\begin{align}
\vec{\mu}^T \vec{q} & \leq 2^{P - 1} - 1 \\
d \alpha + c \beta & \leq 2^{P - 1} - 1 \\
\alpha (d - c) & \leq 2^{P - 1} - 1 \\
\oneNorm{\vec{q}} & \leq \frac{2^P - 2}{2^N - 1}
\label{eq:derivation_f}
\end{align}
Note that this aligns with Prop.~\ref{prop:upper_bound_on_l1_norm}.
Next, we prove that satisfying this bound will also satisfy Eqs.~\ref{eq:ineq_2} and~\ref{eq:ineq_3}.

We continue onto the second inequality, Eq.~\ref{eq:ineq_2}.
Similar to Eq.\ref{eq:mu_values}, the minimizing vector $\vec{\nu}$ is defined as:
\begin{equation}
\vec{\nu} = \argmin_{\vec{x}} ~~\vec{x}^T \vec{q} = \begin{cases}
c, & \text{where } q_i \geq 0 \\
d, & \text{where } q_i < 0
\end{cases}
\label{eq:nu_values}
\end{equation}
Again exploiting our defined identities, we can derive the following upper bound on the $\ell_1$-norm of $\vec{q}$:
\begin{align}
-\vec{\nu}^T \vec{q} & \leq 2^{P - 1} \\
-c \alpha - d \beta & \leq 2^{P - 1} \\
\alpha (d - c) & \leq 2^{P - 1} \\
\oneNorm{\vec{q}} & \leq \frac{2^P}{2^N - 1}
\label{eq:derivation_e}
\end{align}
Note that by satisfying Eq.~\ref{eq:derivation_f}, we also satisfy Eq.~\ref{eq:derivation_e}.

Finally, we evaluate the last inequality, Eq.~\ref{eq:ineq_3}.
From Eqs.~\ref{eq:mu_values} and~\ref{eq:nu_values}, it follows that $\mu_i - \nu_i = (d - c) \text{sign}(q_i)$.
With this new identity, we can derive the following upper bound on the $\ell_1$-norm of $\vec{q}$:
\begin{align}
\left(\vec{\mu} - \vec{\nu}\right)^T \vec{q} & \leq 2^P - 1 \\
(d - c) \text{sign}(\vec{q})^T \vec{q} & \leq 2^P - 1 \\
\oneNorm{\vec{q}} & \leq \frac{2^P - 1}{2^N - 1}
\label{eq:derivation_fe}
\end{align}
Thus, by satisfying Eq.~\ref{eq:derivation_f}, we also satisfy both Eqs.~\ref{eq:derivation_e} and~\ref{eq:derivation_fe}, enabling the use of a signed $P$-bit accumulator.
\end{proof}

\subsection{Proof of Proposition~\ref{prop:n2}}
\label{appendix:prop_2_proof}

To prove Prop.~\ref{prop:n2}, we first present the following lemma: \\

\begin{lemma}
{Let $\vec{x}$, $\vec{w}$, and $\vec{q}$ each be $K$-dimensional vectors.
If $\textrm{\emph{sign}}(x_i) = \textrm{\emph{sign}}(w_i) = \textrm{\emph{sign}}(q_i)$ for all non-zero $x_i$ and $\vert q_i \vert \leq \vert w_i \vert$ for all $i$, then $\vec{x}^T \vec{q} \leq \vec{x}^T \vec{w}$.}
\label{lem:my_lemma_1}
\end{lemma}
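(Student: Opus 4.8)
The plan is to prove $\vec{x}^T \vec{q} \leq \vec{x}^T \vec{w}$ by establishing the inequality componentwise and then summing. Writing $\vec{x}^T \vec{q} = \sum_i x_i q_i$ and $\vec{x}^T \vec{w} = \sum_i x_i w_i$, it suffices to show $x_i q_i \leq x_i w_i$ for each index $i$, since adding these $K$ inequalities yields the claim directly.

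First I would dispose of the indices with $x_i = 0$: both products vanish, so the inequality holds with equality and the signs of $w_i$ and $q_i$ are irrelevant there. For the remaining indices with $x_i \neq 0$, I would invoke the sign hypothesis $\sign(x_i) = \sign(w_i) = \sign(q_i)$. Because $x_i \neq 0$, all three entries share a common nonzero sign, which forces both products to be nonnegative and lets me rewrite them as $x_i q_i = \vert x_i \vert \, \vert q_i \vert$ and $x_i w_i = \vert x_i \vert \, \vert w_i \vert$.

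With the products re-expressed through absolute values, the magnitude hypothesis $\vert q_i \vert \leq \vert w_i \vert$ finishes the componentwise bound: multiplying through by the nonnegative scalar $\vert x_i \vert$ gives $\vert x_i \vert \, \vert q_i \vert \leq \vert x_i \vert \, \vert w_i \vert$, i.e.\ $x_i q_i \leq x_i w_i$. Summing over all $i$ then completes the proof.

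The argument is elementary and I anticipate no substantive obstacle; the only delicate point is the sign bookkeeping. I would emphasize that the hypothesis is quantified only over nonzero $x_i$, so the zero indices must be handled separately rather than folded into the sign argument, and I would fix a convention (e.g.\ $\sign(0) = 0$) under which $\sign(x_i) = \sign(w_i)$ together with $x_i \neq 0$ correctly forces $w_i$ and $q_i$ to be nonzero and co-signed with $x_i$, so that the rewriting in terms of $\vert x_i \vert$ is legitimate.
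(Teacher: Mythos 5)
Your proposal is correct and takes essentially the same route as the paper's proof: both use the sign hypothesis to rewrite each product as $\vert x_i \vert\,\vert q_i \vert$ and $\vert x_i \vert\,\vert w_i \vert$, then apply $\vert q_i \vert \leq \vert w_i \vert$ to conclude. Your explicit separate handling of the $x_i = 0$ indices is a minor point of extra care that the paper's proof leaves implicit, but it is not a different argument.
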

\begin{proof}
{Given that $\textrm{sign}(x_i) = \textrm{sign}(w_i) = \textrm{sign}(q_i)$ for all non-zero $x_i$, it follows that $\vec{x}^T \vec{q} = \textstyle \sum_i \vert x_i \vert \vert q_i \vert$ and $\vec{x}^T \vec{w} = \textstyle \sum_i \vert x_i \vert \vert w_i \vert$.
Using these identities, we can directly derive the following inequality:}
\begin{align}
\vec{x}^T \vec{q} &~\leq \vec{x}^T \vec{w} \\
\textstyle \sum_i \vert x_i \vert \vert q_i \vert &~\leq \textstyle \sum_i \vert x_i \vert \vert w_i \vert \\
\textstyle \sum_i \vert x_i \vert \left( \vert q_i \vert - \vert w_i \vert \right) &~\leq 0
\end{align}
{Given that $\vert q_i \vert \leq \vert w_i \vert$ for all $i$, this leads us to the desired result that the inequality holds, \textit{i.e.}, $\vec{x}^T \vec{q} \leq \vec{x}^T \vec{w}$.}
\end{proof}

Consider again inputs $\vec{x}$ and integer-quantized weights $\vec{q}$.
Recall that simulated {quantization} derives $\vec{q}$ from floating-point counterpart $\vec{w}$ using a transformation function referred to as a quantizer.
To prove Prop.~\ref{prop:n2}, we leverage the necessary accumulator constraint conditions formally articulated in Appendix~\ref{appendix:prop_1_proof}: Eqs.~\ref{eq:new_bound_prop_1}, ~\ref{eq:ineq_1}, ~\ref{eq:ineq_2}, and ~\ref{eq:ineq_3}.
We again directly derive our result by exhaustively evaluating each case. \\

\propTwo*

\begin{proof}
{As shown in Section~\ref{sec:derivation}, $\vec{q}$ must satisfy Eqs.~\ref{eq:new_bound_prop_1}, ~\ref{eq:ineq_1}, ~\ref{eq:ineq_2}, and ~\ref{eq:ineq_3} to avoid overflow when accumulating the result of $\vec{x}^T \vec{q}$ into a $P$-bit register.}
{To show that $\vec{q}$ satisfies these four necessary conditions when $\vec{w} / s$ does as well, we directly prove each case, starting with Eq.~\ref{eq:new_bound_prop_1}.
Given that $\vert s \cdot q_i \vert \leq \vert w_i \vert$ for all $i$ and $s$ is a strictly positive scalar, it follows that $\vert q_i \vert \leq \vert w_i /s \vert$ and thus $\Vert \vec{q} \Vert_1 \leq \Vert \vec{w} / s \Vert_1 $.
Therefore, when $\vec{w} / s$ satisfies Eq.~\ref{eq:new_bound_prop_1}, then $\vec{q}$ does as well.}

{To evaluate Eq.~\ref{eq:ineq_1}, let $\vec{\mu}$ be the vector that maximizes $\vec{x}^T \vec{w}$ as defined in Eq.~\ref{eq:mu_values}.
Given that $s$ is strictly positive and $\sign(s \cdot q_i) = \sign(w_i)$, it follows that $\sign(q_i) = \sign(w_i)$ and thus $\vec{\mu}$ also maximizes $\vec{x}^T \vec{q}$.
Furthermore, given that $\mu_i$ is an $N$-bit integer, the closed interval~$[c,d]$ is defined as~$[-2^{N - 1}, 2^{N - 1} - 1]$ when $\mu_i$ is signed and~$[0, 2^N - 1]$ when unsigned.
It follows that $\sign(\mu_i) = \sign(w_i) = \sign(q_i)$ for all non-zero~$\mu_i$ and $\abs{q_i} \leq \abs{w_i / s}$ for all $i$, and thus $\vec{\mu}^T \vec{q} \leq \vec{\mu}^T \vec{w} / s$ by Lemma~\ref{lem:my_lemma_1}.
Therefore, when $\vec{w} / s$ satisfies Eq.~\ref{eq:ineq_1}, then so does $\vec{q}$.}

{Similarly, let $\vec{\nu}$ be the vector that minimizes $\vec{x}^T \vec{w}$ as defined in Eq.~\ref{eq:nu_values}.
Given that $\sign(q_i) = \sign(w_i)$, then $\vec{\nu}$ also minimizes $\vec{x}^T \vec{q}$.
It again follows that $\sign(-\nu_i) - \sign({w_i}) = \sign(q_i)$ for all non-zero $\nu_i$ and $\abs{q_i} \leq \abs{w_i / s}$ for all $i$, and thus $-\vec{\nu}^T \vec{q} \leq - \vec{\nu}^T \vec{w} / s$ by Lemma~\ref{lem:my_lemma_1}.
Therefore, $\vec{q}$ satisfies Eq.~\ref{eq:ineq_2} when $\vec{w}/s$ does as well.}

{Following the same logic for Eq.~\ref{eq:ineq_3}, $\sign(\mu_i - \nu_i) = \sign(w_i) = \sign(q_i)$ for all $i$ where $\mu_i \neq \nu_i$, and thus $\left(\vec{\mu} - \vec{\nu}\right)^T \vec{q} \leq \left(\vec{\mu} - \vec{\nu}\right)^T \vec{w} / s$, again by Lemma~\ref{eq:ineq_3}.
Therefore, when $\vec{w} / s$ satisfies Eq.~\ref{eq:ineq_3}, then so does $\vec{q}$,  leading to the desired result for all four necessary conditions.}
\end{proof}

\section{Experiment Details \& Ablations}

\subsection{Hyperparameters \& Quantization Schemes}
\label{appendix:hyperparams}

Below, we provide further details on training hyperparameters, neural network architectures, and quantization schemes for our image classification and single-image super resolution benchmarks.
As we are building from the work of~\citealt{colbert2023a2q}, we adopt a quantization scheme that is amenable to compilation through FINN~\cite{finn}, where batch normalization layers, floating-point biases, and even scaling factors are absorbed into thresholding units via mathematical manipulation during graph compilation~\cite{umuroglu2017streamlined}.
Thus, we are not constrained to rely on power-of-2 scaling factors, quantized biases, or batch-norm folding as is common for integer-only inference~\cite{jacob2018quantization, wu2020integer, gholami2021survey}.
For all models, we fix the first and last layers to 8-bit weights and activations for all configurations, as is common practice~\cite{wu2020integer, gholami2021survey}.

{{Following the quantization scheme of~\citealt{colbert2023a2q}, we apply A2Q and A2Q+ to only the weights of a QNN and adopt the regularization penalty defined in Eq.~\ref{eq:reg_penalty} to avoid $g$ getting stuck when $g > T_+$.}
\begin{equation}
R = \max \{ g - T_+, 0 \}
\label{eq:reg_penalty}
\end{equation}
{This penalty is imposed on every hidden layer and combined into one regularizer: $\mathcal{L}_\text{reg} = \textstyle \sum_l \textstyle \sum_i R_{l, i}$, where $R_{l, i}$ denotes the regularization penalty for the $i$-th output channel in the $l$-th layer of the network.
We scale this regularization penalty $\mathcal{L}_{\textrm{reg}}$ by a constant scalar $\lambda=1e-3$ such that $\mathcal{L}_{\textrm{total}} = \mathcal{L}_{\textrm{task}} + \lambda \mathcal{L}_{\textrm{reg}}$, where $\mathcal{L}_{\textrm{task}}$ is the task-specific loss.

\textbf{Baseline QAT.} Our baseline QAT method is synthesized from common best practices.
Similar to A2Q and A2Q+, we symmetrically constrain the weight quantization scheme around the origin such that $z=0$ while allowing activations to be asymmetric~\cite{gholami2021survey, zhang2022learning}.
Eliminating these zero points on the weights reduces the computational overhead of cross-terms during integer-only inference~\cite{jacob2018quantization, jain2020trained}.
We use unique floating-point scaling factors for each output channel (or neuron) to adjust for varied dynamic ranges~\cite{nagel2019data}.
However, extending this strategy to activations can be computationally expensive~\cite{jain2020trained}.
As such, we use per-tensor scaling factors for activations and per-channel scaling factors on the weights, as is standard practice~\cite{jain2020trained, wu2020integer, zhang2022learning}.
Similar to A2Q and A2Q+, all scaling factors are learned in the log domain such that $s=2^d$, where $d$ is a log-scale learnable parameter.
The scaled weights (or activations) are rounded to the nearest integer and clipped to the limits of the representation range (see Section~\ref{sec:a2q_background}).
Noticeably, the rounding function introduces extremely sparse gradients; therefore, we use the straight-through estimator~\cite{bengio2013estimating} during training to allow gradients to permeate the rounding function such that $\nabla_x \lfloor x \rceil = 1$ everywhere and $\nabla_x$ denotes the gradient with respect to $x$.

\textbf{ImageNet models.} When training ResNet~\cite{he2016deep} models on ImageNet~\cite{deng2009imagenet}, we leverage the unmodified implementations from PyTorch~\cite{paszke2019pytorch} as well as their pre-trained floating-point checkpoints.
We use batch sizes of 64 images with an initial learning rate of 1e-4 that is reduced by a factor of 0.1 at epochs 30 and 50.
We fine-tune all models for 60 epochs using the standard stochastic gradient descent (SGD) optimizer with a weight decay of 1e-5.
Before fine-tuning, we apply the graph equalization and bias correction techniques proposed by~\citealt{nagel2019data} using a calibration set of 3000 images randomly sampled from the training dataset.
When applying our Euclidean projection-based weight initialization strategy discussed in Section~\ref{sec:l1_proj_init}, we do so after graph equalization, but before bias correction.
Finally, although common practice is to keep residuals as 32-bit additions~\cite{yao2021hawq}, we quantize our residual connections to 8 bits to reduce the cost of such high-precision additions.

\textbf{CIFAR10 models.} When training MobileNetV1~\cite{howard2017mobilenets} and ResNet18~\cite{he2016deep} to classify images on the CIFAR10 dataset~\cite{krizhevsky2009learning}, we follow the modified network architectures used by~\citealt{colbert2023a2q}.
These modifications reduce the degree of downsampling throughout these networks to yield intermediate representations that are more amenable to the smaller image sizes of CIFAR10.
For MobileNetV1, we use batch sizes of 64 images with an initial learning rate of 1e-3 that is reduced by a factor of 0.9 every epoch.
For ResNet18, we use batch sizes of 256 with an initial learning rate of 1e-3 that is reduced by a factor of 0.1 every 30 epochs.
We use a weight decay of 1e-5 for both models.
We initialize all quantized models from pre-trained floating-point checkpoints and fine-tune for 100 epochs using the standard SGD optimizer.
We again apply the graph equalization and bias correction techniques before fine-tuning, but using a calibration set of 1000 images.
We again use our Euclidean projection-based weight initialization strategy after graph equalization, but before bias correction.
Finally, we further quantize our residual additions to the same bit width specified for our hidden activations, \textit{i.e.}, $N$.

\textbf{BSD300 models.} When training ESPCN~\cite{shi2016real} and U-Net~\cite{ronneberger2015u} to upscale images by a factor of $3\times$ using the BSD300 dataset~\cite{martin2001database}, we again follow the modified architectures used by ~\citealt{colbert2023a2q}.
These modifications rely on the nearest neighbor resize convolution to upsample intermediate representations to improve model accuracy during training~\cite{odena2016deconvolution}, without impacting inference efficiency~\cite{colbert2021energy}.
For both models, we use batch sizes of 8 images with an initial learning rate of 1e-3 that is reduced by a factor of 0.999 every epoch and again use a weight decay of 1e-5.
We randomly initialize all models according to~\citealt{he2015delving} and train them from scratch for 300 epochs using the Adam optimizer~\cite{kingma2014adam}.
Similar to the CIFAR10 models, we quantize our residual additions in U-Net to the hidden activation bit width $N$.

\subsection{A2Q+ for Depthwise Separable Convolutions}
\label{appendix:dws_conv_ablation}

A2Q+ relies on zero-centering the weights for the purpose of alleviating the overly restrictive $\ell_1$-norm constraints of A2Q.
As discussed in Section~\ref{sec:discussion}, several studies have investigated the impact of zero-centering within the context of weight normalization~\cite{huang2017centered, qiao2019micro, li2019additive}.
While these works highlight the favorable properties of zero-centered weight normalization, such as stabilized pre-activation distributions and improved convergence, they seem to overlook an unfavorable property: implicit dimensionality reduction.

{Given $K$-dimensional weight vector $\vec{w}$, the zero-centering operation can be interpreted as a projection onto a $K-1$ hyperplane~\cite{yang2019mean}.
This implies that such a constraint reduces the degrees of freedom of the zero-centered weight vector, and such a reduction has a more significant impact with smaller $K$.
In the context of our work, we find that this introduces issues when handling layers with smaller dot product sizes, as is the case with the depthwise separable convolutions~\cite{sifre2014rigid} commonly used in MobileNets~\cite{howard2017mobilenets}.

Depthwise separable convolutions factorize the standard convolution into two chained operations: (1) a depthwise convolution that applies a single filter to each input channel; followed by (2) a pointwise convolution that applies a $1 \times 1$ kernel that combines the resulting output channels~\cite{sifre2014rigid, howard2017mobilenets}.
The size of pointwise convolution dot products is equivalent to the number of the input channels in the layer, which tend to be large.
However, size of depthwise convolution dot products is equivalent to the size of the kernels, which tend to be orders of magnitude smaller~\cite{howard2017mobilenets}.
Prior studies on zero-centered weight normalization  focus on benchmarks without these convolutions, \textit{e.g.}, VGGs and ResNets.
In the scope of our work, we find that zero-centering the weights of depthwise convolutions negatively impacts model accuracy.

\begin{figure}[t!]
\centering
\includegraphics[width=0.8\linewidth]{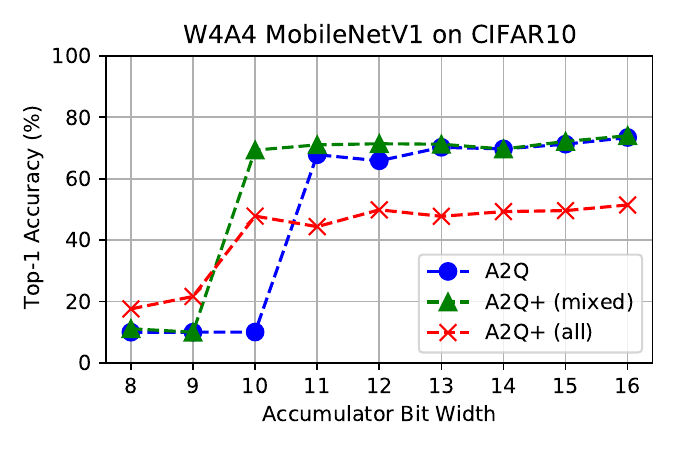}
\caption{We evaluate the impact of zero-centering on depthwise convolutions as we reduce the target accumulator bit width.
We visualize the maximum observed test top-1 accuracy when training a W4A4 MobileNetV1 model on CIFAR10.
We show that using A2Q for all depthwise convolutions and A2Q+ for all other hidden layers (\textcolor{ForestGreen}{\textbf{green triangles}}) outperforms uniformly applying A2Q (\textcolor{blue}{\textbf{blue circles}}) or A2Q+ (\textcolor{red}{\textbf{red crosses}}) to all hidden layers.}
\label{fig:ablation_dws_convs}
\end{figure}

\begin{figure*}[t!]
\centering
\includegraphics[width=0.9\linewidth]{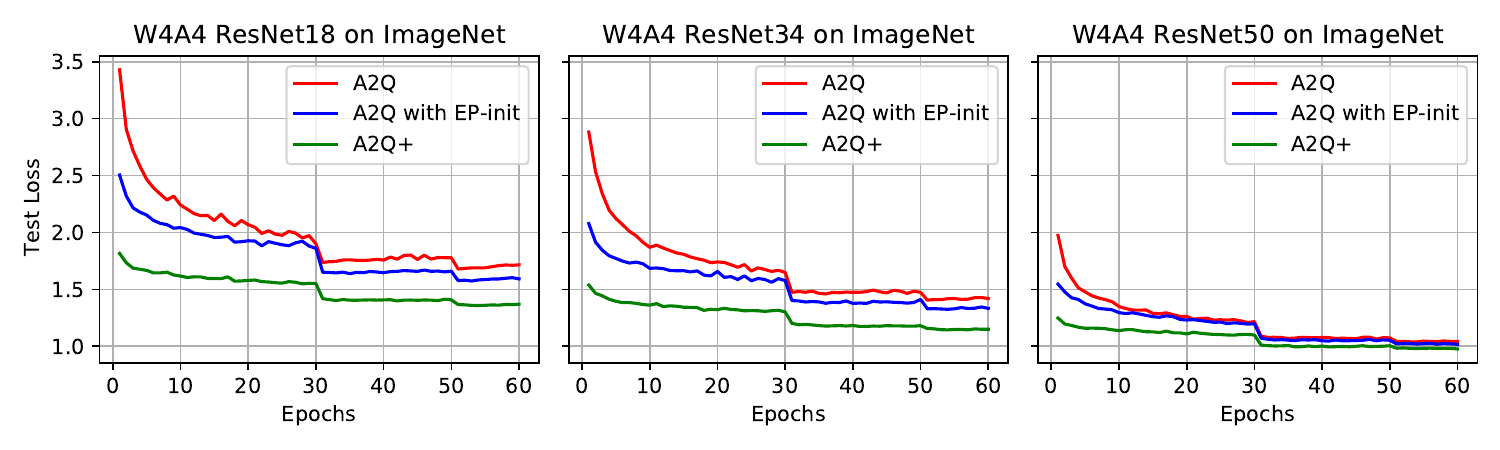}
\caption{We visualize the test cross entropy loss when training ResNet18, ResNet34, and ResNet50 to classify ImageNet images using 4-bit weights and activations (W4A4) and targeting 14-bit accumulation using A2Q.
We observe that our Euclidean projection initialization (EP-init) helps improve convergence.
Note that respective test top-1 accuracies are detailed in Table~\ref{tbl:imagenet}.}
\label{fig:imagenet_training_curves}
\end{figure*}

In Fig.~\ref{fig:ablation_dws_convs}, we evaluate the impact of zero-centering on depthwise separable convolutions.
We visualize the maximum test top-1 accuracy observed over 3 experiments when training MobileNetV1 to classify CIFAR10 images assuming 4-bit weights and activations (W4A4).
Using the hyperparameters detailed in Section~\ref{appendix:hyperparams}, we find that using A2Q for depthwise convolutions and A2Q+ for pointwise convolutions, we are able to recover model accuracy lost to implicit dimensionality reduction.
In addition, we are able to improve the trade-off between model accuracy and accumulator bit width.
Thus, we use the mixed depthwise separable convolution wherever possible in this work.

\subsection{Impact of Euclidean Projection Initialization}
\label{appendix:l1_proj_ablation}

In Section~\ref{sec:l1_proj_init}, we introduce a Euclidean projection-based weight initialization strategy designed to minimize the quantization error when initializing A2Q and A2Q+ models from pre-trained floating-point checkpoints.
We refer to this strategy as EP-init.
{Our results in Section~\ref{sec:more_models} show that EP-init significantly improves A2Q as a reference baseline for ImageNet models.}
This section provides a deeper empirical analysis of the initial weight quantization error.
We additionally discuss our ablation study that isolates the impact of EP-init across quantizers and target accumulator bit widths.

\subsubsection{Initial weight quantization error}
\label{appendix:init_quant_err_analysis}

A common practice when initializing QNNs from pre-trained floating-point checkpoints is to define scaling factor~$s$ to be the ratio given by Eq.~\ref{eq:scaling_init}~\cite{gholami2021survey, zhang2022learning, aggarwal2023post}.
Here, $\max(\vert \vec{w}_{\textrm{float}} \vert)$ is the maximum observed floating-point weight magnitude defined per-output channel and {$M$ is the target weight bit width defined per-tensor}.
\begin{equation}
s = \frac{\max (\vert \vec{w}_{\textrm{float}} \vert)}{2^{M - 1} - 1}
\label{eq:scaling_init}
\end{equation}
Using A2Q and A2Q+ to fine-tune QNNs from pre-trained floating-point checkpoints requires initializing two new learnable parameters: $g$ and $\vec{v}$.
One could trivially initialize $\vec{v}$ to be the pre-trained floating-point weight vector $\vec{w}_{\textrm{float}}$ and $g$ to be its $\ell_1$-norm such that $\vec{v} = \vec{w}_{\textrm{float}}$ and $g$ = $\Vert \vec{w}_{\textrm{float}} \Vert_1$ to ensure $\vec{w}$ = $\vec{w}_{\textrm{float}}$.
While this works well when targeting high-precision accumulators (\textit{e.g.}, 32 bits), we observe that A2Q-quantized networks are forced to quickly recover from extremely high losses when targeting low-precision accumulation scenarios
(\textit{e.g.}, 16 bits or fewer).

Figure~\ref{fig:imagenet_training_curves} visualizes the test cross entropy loss when training various W4A4 ResNets to classify ImageNet images while targeting 14-bit accumulators.
A2Q-quantized networks do not fully recover when na\"ively initialized.
Upon deeper investigation, we identify that this is in large part a consequence of A2Q clipping $g$ according to $T$ in Eq.~\ref{eq:a2q_v1_clip}.

\begin{figure}[t!]
\centering
\includegraphics[width=0.85\linewidth]{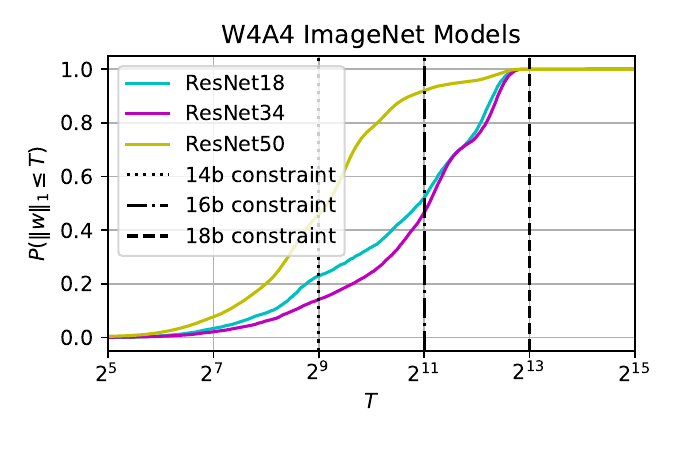}
\caption{We provide an empirical CDF to visualize the percentage of output channels in {various A2Q-quantized} W4A4 ResNets that inherently satisfies 14-, 16-, and 18-bit $\ell_1$-norm constraints when {first} initialized from a pre-trained ImageNet checkpoint.}
\label{fig:resnet18_cdf}
\end{figure}

\begin{figure*}[t!]
\centering
\includegraphics[width=0.9\linewidth]{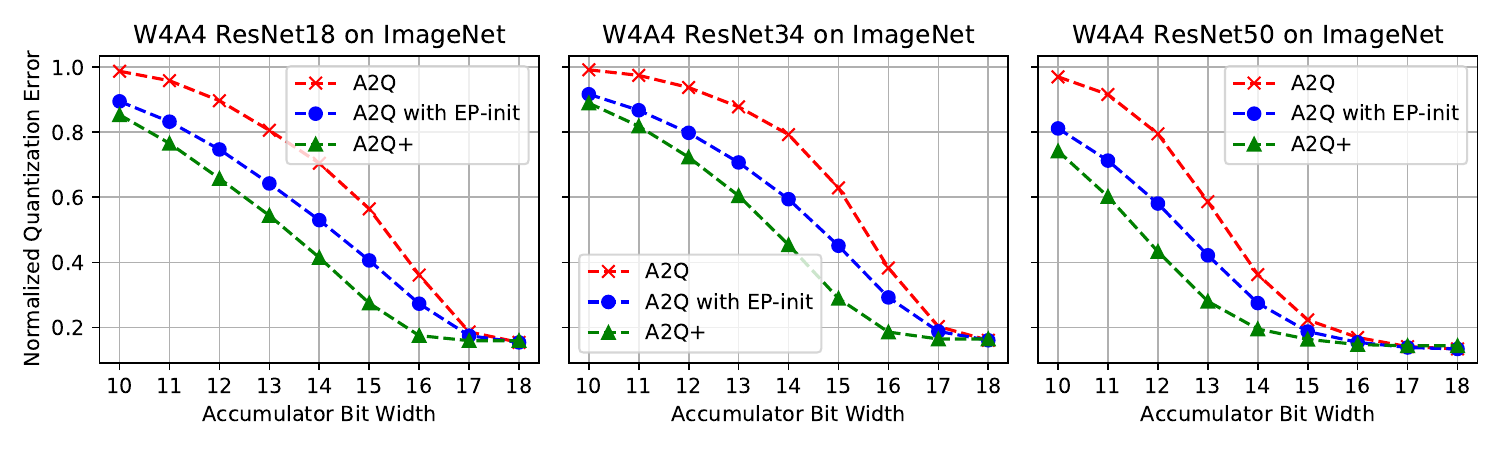}
\caption{We evaluate the normalized weight quantization error averaged over each output channel when initializing W4A4 ResNet variants from a pre-trained floating-point models trained on ImageNet.
As the accumulator bit width is reduced, we observe that our Euclidean projection initialization (EP-init) yields less error than na\"ive initialization for A2Q.
We additionally show that A2Q+ yields the lowest initial weight quantization error by combining EP-init with our new bound.}
\label{fig:weight_quant_error}
\end{figure*}

We first analyze the pre-trained floating-point ResNet checkpoints to demonstrate the breadth of this problem.
We evaluate $\Vert \vec{w}_{\textrm{float}} / s \Vert_1$ for each output channel in each hidden layer of each ImageNet model using the scaling factor definition provided in Eq.~\ref{eq:scaling_init}.
We visualize the results as an empirical cumulative distribution function (CDF) in Fig.~\ref{fig:resnet18_cdf}.
This CDF shows the percentage of channels in each W4A4 ImageNet model that inherently satisfies 14-, 16-, and 18-bit accumulator constraints assuming A2Q is the weight quantizer.
{While all per-channel weight vectors satisfy a 18-bit accumulator constraint upon initialization, we observe that only 52\% of ResNet18 channels inherently satisfy a 16-bit accumulator constraint and a mere 23\% inherently satisfy a 14-bit accumulator constraint.
For ResNet34, we observe 47\% satisfy the 16-bit constraint and 14\% satisfy the 14-bit constraint.
Interestingly, we observe that 92\% of ResNet50 channels inherently satisfy the 16-bit constraint and 46\% satisfy the 14-bit constraint.
We hypothesize this is because our accumulator constraints tighten with the width rather than the depth of a neural network.
This allows model capacity to increase without tightening of constraints.
It is important to note that these observations are dependent on the exact weight values of the pre-trained floating-point checkpoint.
We use the standard pre-trained floating-point checkpoints provided by PyTorch~\cite{paszke2019pytorch} within the scope of this work and leave an exhaustive analysis of other checkpoints for future work.

As a consequence of the $\ell_1$-norm constraints, na\"ively initializing $g$ such that $g = \Vert \vec{w}_{\textrm{float}} \Vert_1$ significantly increases the initial weight quantization error when $\Vert \vec{w}_{\textrm{float}} \Vert_1 > T$.
Consider again the same ResNet models and let the weight quantization error at initialization be defined as follows:
\begin{equation}
\frac{1}{2} \Vert Q(\vec{w}) - \vec{w}_{\textrm{float}} \Vert^2_2
\label{eq:weight_quantization_error}
\end{equation}
To demonstrate how this weight quantization error increases as the target accumulator bit width is reduced, we independently evaluate Eq.~\ref{eq:weight_quantization_error} for each output channel and plot the average in Fig.~\ref{fig:weight_quant_error}.
To account for the varied sizes of each layer in the network, we normalize the quantization error of each output channel by the squared $\ell_2$-norm of the floating-point weights, formally defined as $\frac{1}{2} \Vert \vec{w}_{\textrm{float}} \Vert^2_2$.
Noticeably, when initializing $g$ and $\vec{v}$, the average weight quantization error increases exponentially with the reduction in accumulator bit width regardless of the strategy.
In fact, when targeting 10-bit accumulation, $Q(\vec{w})$ is initialized with nearly 100\% sparsity across all output channels.
However, we are able to effectively minimize initial weight quantization error when using EP-init.
We additionally observe that combining this strategy with our new bound further reduces the initial weight quantization error as the $\ell_1$-norm constraints are relaxed.
We show this reduced weight quantization error yields improved model accuracy in Section~\ref{sec:more_models}.

\subsubsection{Ablation Study on CIFAR10}

When constructing our Pareto frontiers in Section~\ref{sec:per_acc_dt}, we applied EP-init to all A2Q and A2Q+ models to strictly compare the quantizers without the influence of initialization.
To isolate the influence of initialization, we detail an ablation study that further investigates the impact of EP-init on model accuracy.
Our analysis aims to further connect initial weight quantization error to model accuracy.
Thus, we focus on ResNet18 trained on the CIFAR10 dataset.
Building from the ImageNet analysis, we again focus on 4-bit weights and activations (W4A4).

We first analyze $\Vert \vec{w}_{\textrm{float}} / s \Vert_1$ for each output channel in each layer of the model using the scaling factor initialization defined in Eq.~\ref{eq:scaling_init}.
We again calculate the percentage of channels that inherently satisfy various $\ell_1$-norm constraints and visualize the analysis as an empirical CDF in Fig.~\ref{fig:ablation_l1_proj}a.
We observe that all channels natively satisfy an 18-bit accumulator constraint with only 53\% satisfying a 16-bit constraint and only 19\% satisfying a 14-bit constraint.

\begin{figure*}[t!]
\centering
\subfloat[]{\includegraphics[width=0.33\linewidth]{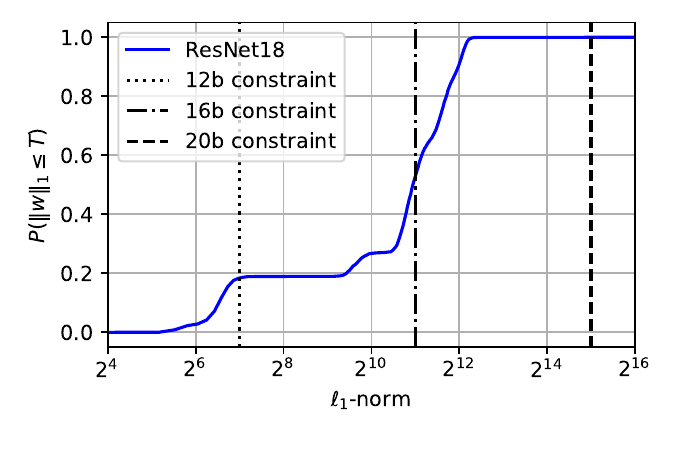}}
\subfloat[]{
\includegraphics[width=0.33\linewidth]{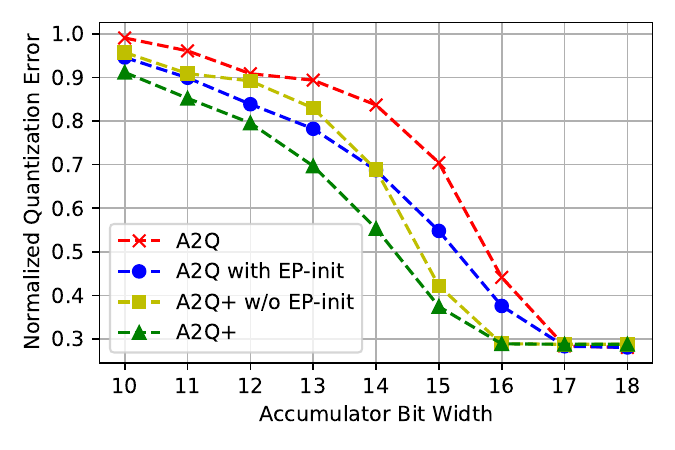}}
\subfloat[]{\includegraphics[width=0.33\linewidth]{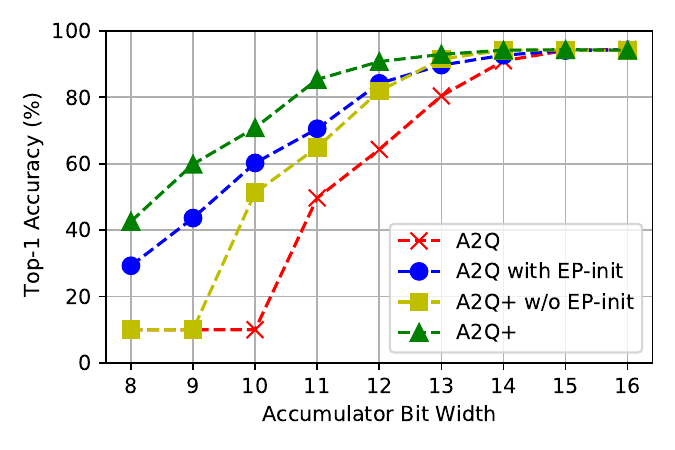}}
\caption{We evaluate the impact of Euclidean projection-based weight initialization (EP-init) as we reduce the target accumulator bit width for W4A4 ResNet18 trained on CIFAR10: (a) we visualize an empirical CDF to visualize the percentage of output channels that inherently satisfies various $\ell_1$-norm constraints; (b) we visualize the initial weight quantization error for both A2Q and A2Q+ with and without EP-init; and (c) we visualize the maximum observed test top-1 accuracy for both A2Q and A2Q+ both with and without EP-init.}
\label{fig:ablation_l1_proj}
\end{figure*}

Next, we evaluate the initial weight quantization error as the target accumulator bit width is reduced.
We again normalize the quantization error of each output channel by its squared $\ell_2$-norm.
We visualize the results in Fig.~\ref{fig:ablation_l1_proj}b, where we plot the average weight quantization error for each target accumulator bit width for both A2Q and A2Q+ models with and without EP-init.
Similar to our ImageNet results, we observe that combing our new bound with EP-init yields the lowest initial weight quantization error across target accumulator bit widths.

Finally, we evaluate the how initial weight quantization error translates to model accuracy as we reduce the target accumulator bit width for both A2Q and A2Q+ models.
In Fig.~\ref{fig:ablation_l1_proj}c, we visualize the maximum test top-1 accuracy observed over 3 experiments.
Intuitively, the strategy for initialization becomes more important as the target accumulator bit width is reduced.
The impact of EP-init increases as the expected initial weight quantization error increases, with the highest impact in the extremely low-precision accumulation regime.
In fact, we observe that EP-init yields up to a $+50\%$ increase in test top-1 accuracy for both A2Q+ and A2Q when targeting $9$- and $10$-bit accumulation, respectively.

\end{document}